\algnewcommand{\algorithmicand}{\textbf{ and }}
\algnewcommand{\algorithmicor}{\textbf{ or }}
\algnewcommand{\OR}{\algorithmicor}
\algnewcommand{\AND}{\algorithmicand}
\algnewcommand{\var}{\texttt}
\newtheorem{definition}{Definition}
\newtheorem{proposition}{Proposition}
\title{Approximately Optimal Search on a Higher-dimensional Sliding Puzzle}
\author[a,b,*]{Nono SC Merleau}
\author[a,b,*]{Miguel O'Malley}
\author[a,b]{Érika Roldán}
\author[a,b,c]{Sayan Mukherjee}
\affil[a]{Center for Scalable Data Analytics and Artificial Intelligence and Department of Computer Science, Universit\"{a}t Leipzig}
\affil[b]{Max Planck Institute for Mathematics in the Sciences}
\affil[c]{Duke University, Departments of Statistical Science, Mathematics, Computer Science, and Biostatistics \& Bioinformatics}
\affil[*]{Corresponding authors: csaha@aims.edu.gh/miguelomalley9@gmail.com}
\begin{document}
\maketitle

\begin{abstract}

Higher-dimensional sliding puzzles are constructed on the vertices of a $d$-dimensional hypercube, where $2^d-l$ vertices are distinctly coloured. Rings with the same colours are initially set randomly on the vertices of the hypercube. The goal of the puzzle is to move each of the $2^d-l$ rings to pre-defined target vertices on the cube. In this setting, the $k$-rule constraint represents a generalisation of edge collision for the movement of colours between vertices, allowing movement only when a hypercube face of dimension $k$ containing a ring is completely free of other rings.
Starting from an initial configuration, what is the minimum number of moves needed to make ring colours match the vertex colours? An algorithm that provides us with such a number is called God's algorithm. When such an algorithm exists, it does not have a polynomial time complexity, at least in the case of the 15-puzzle corresponding to $k=1$ in the cubical puzzle. This paper presents a comprehensive computational study of different scenarios of the higher-dimensional puzzle. A benchmark of three computational techniques, an exact algorithm (the A* search) and two approximately optimal search techniques (an evolutionary algorithm (EA) and reinforcement learning (RL)) is presented in this work. The experiments show that all three methods can successfully solve the puzzle of dimension three for different face dimensions and across various difficulty levels. When the dimension increases, the A* search fails, and RL and EA methods can still provide a generally acceptable solution, i.e. a distribution of a number of moves with a median value of less than $30$. Overall, the EA method consistently requires less computational time, while failing in most cases to minimise the number of moves for the puzzle dimensions $d=4$ and $d=5$. 

\end{abstract}

\section{Introduction}

Many engineering problems of scientific importance can be related to the general problem of finding the shortest path through a graph. When considering a game setting, the graph vertices are game configurations, and an edge exists between two vertices if a legal move (depending on the rules of the puzzle) allows one configuration to be obtained from another. Like finding the shortest path through a graph, solving a sliding puzzle involves finding the minimum number of moves (or slides) to get the target configuration from a given starting configuration. The $15$-puzzle is a classic example often employed to demonstrate the potential for some starting position configurations to yield unsolvable results (See Figure \autoref{fig:start} and Figure \autoref{fig:target} for illustration). A study in more depth of the $15$-puzzle for a more general graph has previously been done, and its solvability has been understood \cite{wilson1974graph}, but it still needs a general rule, i.e. for face dimension $k\geq1$. 

Inspired by the $15$-puzzle, Alpert \cite{alpert2019hexagons} examines square and hexagonal sliding puzzles in greater generality, to establish broader properties of their configuration spaces. Rold{\'a}n and Karpman \cite{karpman2022parity} extend their examination to establish parity properties of hexagonal sliding puzzles, identifying a broader analogue of the parity property present in the configuration space of the $15$-puzzle for the hexagonal case. Most recently, Rold{\'a}n et al. \cite{beyer2023higher} establish a generalisation of the sliding puzzle to the cubical case, establishing strong conditions for the solvability of cubical puzzles and characterising the connectivity regime of $k$-mobile configurations (those puzzle configurations with no stuck vertices) as either one or two large components. However, the computation of the diameter of the puzzle graph (the greatest integer representing a minimal number of moves from one configuration to another) is known to be NP-hard in the case of the $15$-puzzle \cite{goldreich}.

In the absence of theoretical precision, we implement methods from A* search and evolutionary algorithms (EA) to reinforcement learning (RL) to provide an appropriate estimate of the diameter for various cubical sliding puzzles. We further seek to compare the speed and accuracy of these methods to each other and human performance in an online version of this game. A straightforward and classic heuristic approach that guarantees the shortest path through graphs is the A* search algorithm. Two components are essential to the A* algorithm: an evaluation function (or the chosen heuristic) and a successor operator. As stated in the earlier work of Peter E. Hart \cite{hart1968formal}, A* search guarantees the optimal solution if the evaluation function chosen is admissible and monotonic. First, we design an A* algorithm adapted to the high-dimensional cubical puzzle and discuss its heuristic choice. Next, we design approaches using EA and RL.
\begin{figure*}[!t]
  \centering
  \subfloat[\textbf{15-puzzle: Starting configuration.}]{\includegraphics[width=0.3\textwidth]{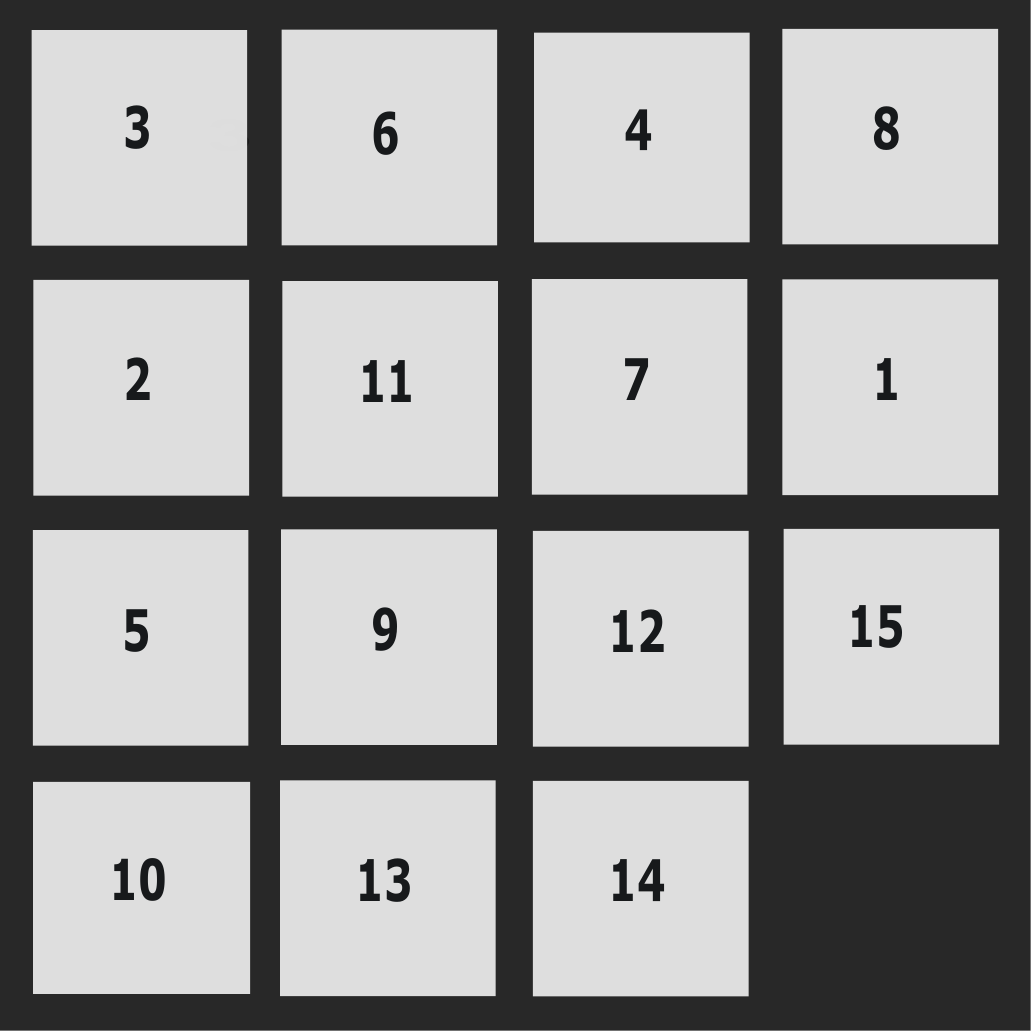}\label{fig:start}}
  \hfil
 \subfloat[\textbf{15-puzzle: Target configuration.}]{\includegraphics[width=0.3\textwidth]{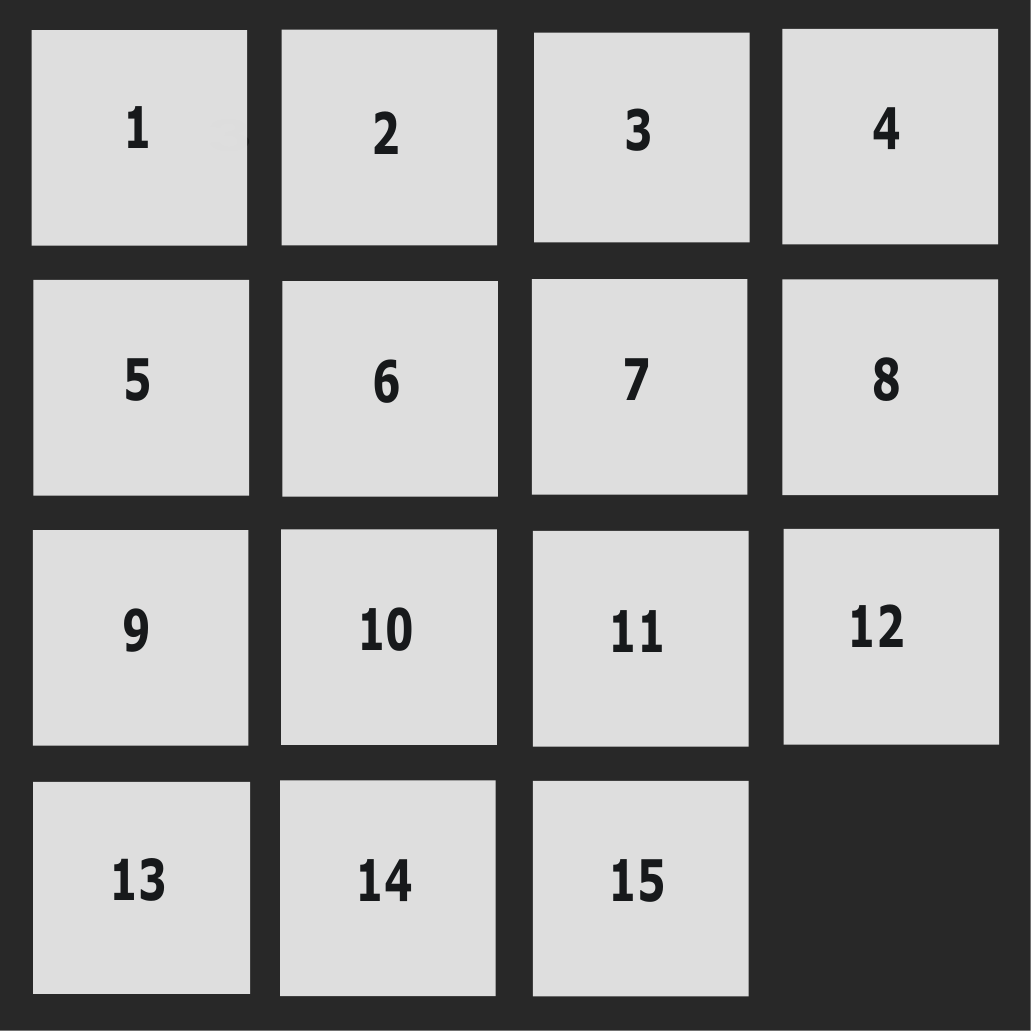}\label{fig:target}}

 \subfloat[\textbf{Cubical sliding puzzle: Starting configuration.}]{\includegraphics[width=0.35\textwidth]{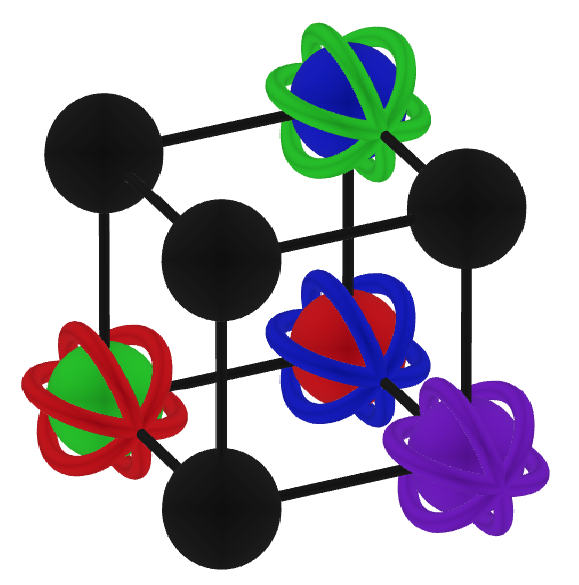}\label{fig:startHSP}}
\hfil
 \subfloat[\textbf{Cubical sliding puzzle: Target configuration.}]{\includegraphics[width=0.35\textwidth]{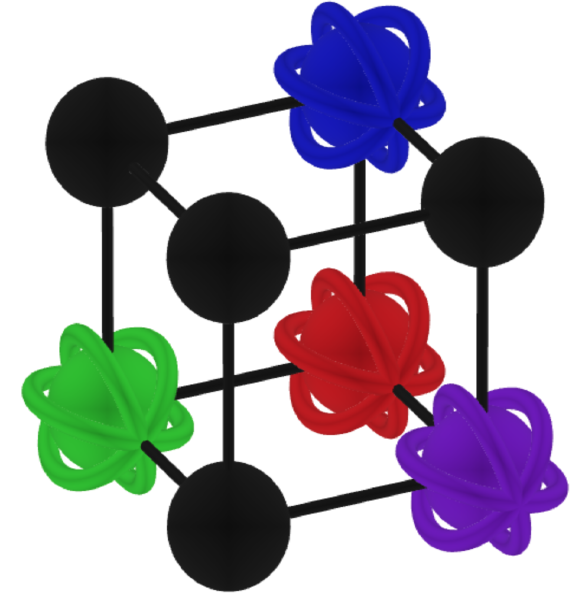}\label{fig:targetHSP}}
 \caption{\textbf{An illustration of the classical 15-puzzle and the high-dimensional puzzle of dimension $d=3$ for the difficulty level $0$}. 
On the top rows is a classic example of the $15$-puzzle. Starting from configuration (a), the puzzle consists of numbered square tiles which can be slid into a frame using an empty slot. The object uses the space to slide all tiles where they belong (b), the target configuration. 
The bottom rows show a high-dimensional sliding puzzle of dimension $d=3$. There are $l=4$ uncoloured vertices (i.e., in black) and $4$ coloured vertices with rings of the same colours. On the starting configuration (c), only one ring matches the vertex colour, the purple ring. In contrast, all the ring colours match the vertex colours on the target configuration (d).}
 \label{fig:sliding_puzzle}
\end{figure*}
The second class of optimisation techniques we explore in this work is the RL technique. RL's fundamental idea is that we learn by interacting with our environment \cite{andrew1999reinforcement}. This interaction produces information about cause, effect, and the consequences of actions; in RL, it is more about what we do to achieve a specific goal; in the case of the sliding puzzle, we may think of it as what moves to perform, to reach a particular target configuration. The main components of RL are often an agent, a set of policies, an environment and a reward function. In that regard, the agent interacts with a dynamic environment to achieve a particular goal, e.g., to maximise its reward. The technique is quite flexible and powerful in dealing with problems where an exact mathematical model of the environment is unknown, e.g., in large Markov Decision Processes (MDP). RL has found applications in many disciplines, such as game theory \cite{littman1994markov}, control theory \cite{lewis2013reinforcement}, operations research \cite{powell2007approximate}, information theory \cite{sutton2018reinforcement}, simulation-based optimisation \cite{gosavi2015simulation}, multi-agent systems \cite{busoniu2008comprehensive}, swarm intelligence \cite{yang2020nature}, and statistics \cite{murphy2003optimal}. 
Applying the RL method to any problem often requires modelling the problem as a sequential decision-making process, i.e., defining the MDP. The main ingredients of MDPs are usually the state space, the action space, the reward, and the transition functions \cite{bellman1957markovian}. In addition to the previously mentioned application fields, RL has already been applied to several classes of combinatoric optimisation problems \cite{mazyavkina2021reinforcement} such as the Travelling Salesman Problem (TSP) \cite{bello2016neural, nazari2018reinforcement},  Maximum Cut (Max-Cut) problem \cite{barrett2020exploratory}, Maximum Independent Set (MIS) \cite{li2018combinatorial, khalil2017learning}, Minimum Vertex Cover (MVC) \cite{yolcu2019learning}, and Bin Packing Problem (BPP) \cite{kundu2019deep, zhang2021attend2pack}. The authors \cite{mazyavkina2021reinforcement} provide an extensive overview of various RL applied to this class of problems. Designing an RL algorithm for the higher-dimensional sliding puzzle follows steps similar to those for the above-mentioned combinatorial problems. Our main challenge is to define a proper rewarding function that integrates most knowledge of the environment. The method section discusses the RL steps to solve the higher-dimensional puzzle and clarifies the reward function at each step.

Finally, we study the performance of EAs on the higher-dimensional sliding puzzle. Similar to RL methods, EAs are well-known meta-heuristic approaches to optimisation problems, especially when dealing with issues in which less information about the fitness landscapes is provided or when there is no exact algorithm in polynomial time for such matters. EA was inspired by evolutionary biological systems and since it was proposed by John Holland \cite{holland1992adaptation} in the early 1970s, it has emerged as a popular search heuristic. It has found application in many disciplines that deal with complex landscape optimisation problems, such as bioinformatics \cite{merleau2022arnaque, merleau2021simple, esmaili2014evolutionary, esmaili2015erd,wiese2008rnapredict}, engineering science \cite{mitchell1998introduction}, in solving complex variants of the Vehicle Routing Problem (VRP) \cite{jozefowiez2009evolutionary,zhao2009hybrid}. The picture in all these systems was to evolve a population of candidate solutions to a given situation (or target), using operators inspired by genetic variation and natural selection. 

Similarly, solving the higher-dimensional sliding puzzle using EA will involve a population of game configurations to a specific target configuration. As for most optimisation problems, designing an EA comprises three essential steps: genotype representation, mutation or/and crossover operators, and a selection procedure. Here, we describe a framework that allows us to apply EA to the higher-dimensional sliding puzzle to provide an estimate of the diameter distribution of a given puzzle. For comparison's sake, we also offer performance comparisons to an adapted RL algorithm and optimal A* search for completeness. Our comparative study shows that the A* algorithm can solve the puzzle of dimensions $3$ and $4$ for different face dimensions and across various difficulty levels except for level $4$ of the puzzle dimension $4$. Furthermore, EA and RL demonstrate a similar potential to solve the higher-dimensional puzzle across all levels and face dimensions but differ in the median number of moves and CPU time. While EA has a high median number of moves and low CPU time, RL solves the puzzles with a lower median number of moves but with higher CPU time.

This work is organised into three main sections. The first section briefly describes the game setting, and the second section provides the different algorithms, A*, RL, and EA. Finally, in the last section, we provide a comprehensive benchmark of the explored algorithms, including their performance on different puzzles and CPU time. 

\section{Game Setting}
The higher-dimensional sliding puzzle we study in this work is built on a $d$-dimensional cube (Each node of the cube is a binary string of length $d$, and two nodes are connected if they differ by a single bit) where $2^d-l$ randomly selected vertices are coloured (i.e., $l$ vertices are uncoloured or have the same colour). Then, there are precisely $2^d-l$ rings with the same colours initially set randomly on vertices. The $15$-puzzle can be seen as a version of this game played on a ($4\times4$)-grid. In that setting, rings block each other's movement simply by being in the way. However, topologically, this can be considered to represent the rule that movement is blocked when the $1$-simplex where the ring would move over is occupied by another ring. The higher-dimensional puzzle setting is a generalisation of this puzzle's setup as the $(d,k,\ell)$ scheme, where the puzzle is played on a $d$-dimensional cube with $2^d-\ell$ coloured vertices and rings, and where each move consists of moving one ring to a vertex which shared the same $k$-face so long as any other rings do not occupy that face \cite{wilson1974puzzles}.

More formally, for a given $d$ and $l$, let $\mathcal{C}[d,l]$ be the set of all configurations of $2^d-l$ labeled/coloured vertices on the vertices of the hypercube $Q^d$, and $\mathcal{L}_C =\{c_1,c_2,...,c_{2^d-l}\}$ the set of labels/colours of these vertices.

\begin{definition}
	Let $\mathcal{F}$ be a $k$-dimensional face of $Q^d$ and $r$ a ring on one of the vertices of $\mathcal{F}$. If the remaining vertices of $\mathcal{F}$ are unoccupied, then we say that $r$ is in a \textit{free $k$-state with respect to the face $\mathcal{F}$}. A ring that is on a free $k$-state with respect to  $\mathcal{F}$ can move (or slide) to occupy any of the vertices of $\mathcal{F}$. We call this move of $r$ a \textit{$k$-move on $\mathcal{F}$}.
    \label{Def:krule}
\end{definition}

Definition \ref{Def:krule} is fundamental for the essential components of the algorithms we study in this work, such as the mutation operator, successor operator and actions for the respective EA, A*, and RL algorithms. \autoref{fig:sliding_puzzle} shows an example of the sliding puzzle of dimension $d=3$ and difficulty level $0$. Starting from the starting configuration (see Figure \autoref{fig:startHSP}), the sliding puzzle consists of moving (or sliding) the rings to free vertices until all the ring colours match the vertex colours (i.e., the target configuration, Figure \autoref{fig:targetHSP}). The rings move based on the Definition \ref{Def:krule} on the faces of the $d-$dimensional cube, e.g. the purple ring can move to any of its adjacent vertices since the face occupied by the purple ring is free.


The difficulty level of the sliding puzzle depends directly on the number of uncoloured vertices $l$ and the dimensions of the cube and face. Throughout this paper, we consider for a fixed face dimension $k$ and number of colours $2^d -l$, puzzles of higher dimensions harder than the ones of lower dimensions. In addition to the previously mentioned parameters, the starting configuration is necessary to define the puzzle's difficulty level properly. \autoref{tab:game_parameters} summarises the main parameters of the sliding puzzle we will use throughout our benchmark analysis. 
\begin{table}[H]
    \centering
    \caption{Main parameters of the sliding puzzle}
    \begin{tabular}{c|p{6cm}|c}
         \toprule Parameter& Description & Constraints \\
         \midrule $d$& The dimension of the cube& $d\geq 1$\\
         $k$& The dimension of the faces& $k\leq d$ \\ 
         $l$ & The number of unlabelled (equally the number of uncoloured vertices) & $d\geq 1, 1\leq l \leq 2^d$\\ 
         $\mathcal{S}$ & Starting configuration & A list of tuples (ring position, colour)\\ 
         $\mathcal{T}$ & Target configuration & A list of tuples (ring position, colour)\\ 
        \bottomrule
    \end{tabular}
    \label{tab:game_parameters}
\end{table}
For the puzzles studied in the next sections, the parameters $(d,l)$ are fixed, and different puzzles are generated, each differing from the starting configuration. Then, for different starting configurations(or difficulty levels), we analyse the algorithm's performances for various face dimensions. The \autoref{appendix:data} provides a list of starting and target configurations for puzzles of dimensions $3, 4$, and $5$. 
\section{Computational approaches to the higher-dimensional sliding puzzle}
We study in this work three computational approaches to the higher-dimensional sliding puzzle: the $\text{A}^*$ search, the RL and the EA. This section presents an overview of each technique and a general description of their corresponding algorithms.
\subsection{A* search}
The sliding puzzle is defined on a hypercube $Q_d$, where each position of the rings leads to a unique game configuration. Both the target ($\mathcal{T}$) and the starting ($\mathcal{S}$) configurations are represented by a list of pairs (ring position and ring colour) of the same length ($2^d - l$). The goal is to find the minimum number of moves to get to $\mathcal{T}$, starting from $\mathcal{S}$. 

As with any puzzle seeking the shortest path between two points in the configuration space, slow, optimal methods exist that will surely return the shortest possible solution; perhaps the most well-known among these methods is A* search, implemented easily, albeit at great computational expense. In the case of A* search, each ring position and colour list is converted into a binary vector of dimension $d\times (2^d -l)$.
\begin{algorithm}[t!]
	\tcc{$Q^d$: a hypercube of dimension $d$ on which the sliding puzzle is defined\;
        $k \in \left \{ 1,\cdots, d-1\right\}$: the $k$ rule for the given puzzle\;
		$\mathcal{S}\subset Q^d$: the initial ordered unique configuration of rings on the hypercube $Q^d$\;
        $\mathcal{T}\subset Q^d$: the target ordered unique configuration of rings on the hypercube $Q^d$\;
        $g_{\mathcal{S}}(\mathcal{C})$: a function returning the number of moves required to reach configuration $\mathcal{C}\subset Q^d$ from $\mathcal{S}$\;
        $h_{\mathcal{T}}(\mathcal{C})$: a function returning a lower bound on the minimum number of moves required to reach $\mathcal{T}$ from $\mathcal{C}$\;
        $f_{\mathcal{S},\mathcal{T}}(\mathcal{C}) = g_{\mathcal{S}}(\mathcal{C})+f_{\mathcal{T}}(\mathcal{C})$: the sum of $h$ and $g$ on $\mathcal{C}$\;
        $p(\mathcal{C})$: a function returning the parent configuration of $\mathcal{C}$\;
        $\mathcal{N}$: the set of configurations directly reachable from $\mathcal{C}$\;
        $M = \left\{\mathcal{S},\mathcal{C}_1,\dots,\mathcal{C}_{n-1}, \mathcal{T}\right\}$: a list of configurations between $\mathcal{S}$ and $\mathcal{T}$;}
	\KwInput{$\mathcal{S}$, $\mathcal{T}$, $Q^d$, $g_{\mathcal{S}}$, $h_{\mathcal{S}}$, $f_{\mathcal{S},\mathcal{T}}$}
	\KwOutput{$M$}
        $\mathcal{C}^* \leftarrow\mathcal{S}$ \tcp*{The current configuration.}
	    $\mathcal{N} \leftarrow N(\mathcal{S})$ \tcp*{Get neighbors of starting configuration.}
	$O \leftarrow \{\}$ \tcp*{The open set of configurations.}
        $X \leftarrow \{\}$ \tcp*{It is defined to be the closed set of configurations.}\;
        \While{$\mathcal{C}^*\neq \mathcal{T}$}{
    	\For{$\mathcal{C} \in \mathcal{N}$}
                {\If{$\mathcal{C}\in O$} 
                        {\If{$g_{\mathcal{S}}(\mathcal{C}^*)+1<g_{\mathcal{S}}(\mathcal{C})$}{
                            $p(\mathcal{C})\leftarrow\mathcal{C}^*$\tcp*{This path is shorter, record it.}
                            $g_{\mathcal{S}}(\mathcal{C}) \leftarrow g_{\mathcal{S}}(\mathcal{C}^*)+1$ \tcp*{Adjust cost.}
                            }
                        }
                \If{$\mathcal{C}\notin O$ \AND $\mathcal{C}\notin X$}{
                    $O = O\cup\{\mathcal{C}\}$ \tcp*{This configuration is new, record it.}
                    $p(\mathcal{C}) \leftarrow \mathcal{C}^*$\;
                    $g_{\mathcal{S}}(\mathcal{C}) \leftarrow g_{\mathcal{S}}(\mathcal{C}^*)+1$ \tcp*{This configuration has taken one additional step to reach.}
                    } 
    		}
            $X = X\cup \{\mathcal{C}^*\}$\tcp*{Add current configuration to the closed list.}
            
            \For{$\mathcal{C}'\in O$}{ 
            \If{$f_{\mathcal{S},\mathcal{T}}(\mathcal{C})\leq f_{\mathcal{S},\mathcal{T}}(\mathcal{C}')$} {
             Add $\mathcal{C}'$ to $\mathcal{C}^*$ \tcp*{
           Select the configuration with least $f$ cost and set it to current.}
        
            }
        }
        $O \leftarrow O\setminus \mathcal{C}^*$ \tcp*{Remove $\mathcal{C}^*$ from the open set.}
        }
        $M \leftarrow \{\mathcal{S}\}$ \tcp{Initialise $M$.}
        \While{$p(\mathcal{C}^*)$ is not undefined}{
            $M \leftarrow M \cup \{p(\mathcal{C}^*) \}$ \tcp{Construct list of configurations by backtracking.}
            $\mathcal{C}^* \leftarrow p(\mathcal{C}^*)$.}
	\caption{A* search}\label{Algo:A*}
    
\end{algorithm}

The classical A* search algorithm seeks to find the minimum cost path of a subgraph $G_s$ starting from a game configuration $\mathcal{S}$ to a preferred goal configuration $\mathcal{T}$. Starting with a node $\mathcal{S}$, the algorithm generates some part of the sub-graph $G_s$ by repetitively applying a successor operator. 
The general search algorithm A* follows the steps \cite{hart1968formal}:
\begin{enumerate}
    \item Mark $\mathcal{S}$ ``open'' and evaluate $\mathcal{S}$ with respect to the evaluation function $f$.
    \item Select the open node $\mathcal{C}$ whose value of $f$ is smallest.
    \item Resolve ties arbitrarily, but always in favour of any node $\mathcal{C}$.
    \item  If $\mathcal{C} \equiv \mathcal{T}$, mark $\mathcal{C}$ ``closed'' and terminate the algorithm.
    \item Otherwise, mark $\mathcal{C}$ closed and apply the successor operator to $\mathcal{C}$. Calculate $f$ for each successor of $\mathcal{C}$ and mark each successor not already marked closed as open.
\end{enumerate}
A few specifications are needed to implement the A* method in the sliding puzzle. Two important concepts are to be defined: 
\subsubsection*{The evaluation function ($f$)} 
The central concept of the A* algorithm is to propose an evaluation function $f(\mathcal{C})$. The cost from the configuration $\mathcal{C}$ to the next optimal configuration $\mathcal{C}'$ is estimated by selecting the node with the lowest $f(.)$ cost. The evaluation function used is therefore defined as follows: 
\begin{equation}
    f_{\mathcal{S},T}(\mathcal{C}) = g_{\mathcal{S}}(\mathcal{C}) + h_{\mathcal{T}}(\mathcal{C})
    \label{Eq:evaluation}
\end{equation}
where $g(\mathcal{C})$ is the number of moves required to reach configuration $\mathcal{C} \subset Q_d$ from $\mathcal{S}$; $h(\mathcal{C})$ is the lower bound on the minimum number of moves required to reach the target configuration $\mathcal{T}$.

For a given target and any configuration vectors $\mathcal{T}, \mathcal{C} \in \left\{ 0,1\right\}^{d\times(2^d -l)}$, we define the following heuristic distance.

\begin{definition}
    Let $Q^d$ be a hypercube in dimension $d\in\mathbb{N}.$ Let $\mathcal{T}$ be an ordered unique subset of the vertices of $Q^d$. Let $k$ be the $k$-rule for the given puzzle. Let $\mathcal{C}$ be any given configuration of the same cardinality as $\mathcal{T}$. Then, \[ h_{\mathcal{T}}(\mathcal{C}) = \sum_{i=0}^{\#\mathcal{C}}\left\lceil \frac{|\mathcal{C}_i-\mathcal{T}_i|}{k} \right\rceil.\]
\end{definition}

The above distance represents the shortest possible number of moves by which the target is reachable if the $k$-rule blocks no configurations.\\

It is well-known that the $A*$ search is optimal if the heuristic chosen is admissible and monotonic \cite{hart1968formal}. A heuristic is \textbf{admissible} if it never overestimates the distance to the goal. Trivially, the null heuristic (set all distances to $0$) is admissible. By construction, the above heuristic is admissible. A heuristic is \textbf{monotonic} if for any configurations $\mathcal{C}, \mathcal{C}'$, $h(\mathcal{C})<d(\mathcal{C},\mathcal{C}')+h(\mathcal{C}')$. Since $d(\mathcal{C},\mathcal{C}')$ is precisely the number of moves required to reach $\mathcal{C}'$ from $\mathcal{C}$, it is clear the above heuristic is monotonic.

\subsubsection*{The successor operator ($\Gamma$)}

During the course of the A* algorithm, if $\Gamma$ is applied to a node (or configuration) $\mathcal{C}$, it expands the node $\mathcal{C}$, i.e., generates all possible configurations we can reach from $\mathcal{C}$ respect to the $k$-rule set. For a given parameter $k$, our successor operator will execute the following steps: 
\begin{enumerate}
    \item For each ring position $i$, 
    \begin{enumerate}
        \item remove $\mathcal{C}_i$ from $\mathcal{C}$ 
        \item get all the faces of dimensions $k$
        \item For each of the faces, 
        \begin{enumerate}
            \item check if the face is free (i.e. if there is only one ring on the face, the one at position $i$)
            \item save the possible move $(i,j)$ for each node $j$ of the face in a list if the face is free
        \end{enumerate}
    \end{enumerate}
    
\end{enumerate}

Given the hypercube $Q_d$, the starting and target configurations ($\mathcal{S}$ and $\mathcal{T}$),  the evaluation function $f_{\mathcal{S}, \mathcal{T}}$ and the successor operator $\Gamma$, the A* search algorithm for the sliding puzzle is given in \autoref{Algo:A*}.

\subsection{Reinforcement learning (RL)}
Similar to classical combinatorial problems such as TSP, applying the RL method to the higher-dimensional puzzle needs to be modeled as a sequential decision-making process, where the agent interacts with the environment by performing a sequence of actions to find a solution. The MDP provides a widely used mathematical framework for modeling this type of problem.

\begin{definition}
    MDP can be defined as a tuple $M = (Q^d, A, R, T, r, s)$, where 
\begin{itemize}
    \item $Q^d$ - state space $\mathcal{C}_t \in Q^d$. State space for our higher-dimensional puzzle represents any valid game configuration, i.e., a configuration accessible through a $k$-rule;
    \item $A$ - action space $a_t \in A$. Actions represent the list of possible moves with respect to the $k$-rules (e.g. performing a move of a ring from one node to another with respect to the $k$-rule);
    \item $R$ - reward function maps states and actions into real numbers. Rewards indicate how a move chosen in a particular state improves or worsens the distance to the target configuration;
    \item $T$ - transition function $T(\mathcal{C}_{t+1}| \mathcal{C}_t, a_t)$ that governs transition dynamics from one configuration to another in response to performed move;
    \item $r$ - scalar discount factor or rewarding rate, $0 <r \leq 1$. Discount factor encourages the agent to account more for short-term rewards;
    \item $s$ - horizon, which defines the length of the episode, where the episode is defined as a sequence $\left\{\mathcal{C}_t,a_t, \mathcal{C}_{t+1}, a_{t+1},\mathcal{C}_{t+2}, a_{t+2}, \cdots \right\}^s_0$. Since, in our case, the solutions are constructed incrementally, $H$ is defined naturally by the number of moves performed until the target configuration is reached.
\end{itemize}
\end{definition}

In our case, the goal of an agent acting in MDP is to find a policy function $\pi(\mathcal{C})$ that maps configuration into right move. Solving MDP means finding the optimal policy that maximises the expected cumulative discounted sum of rewards:

\begin{equation}
    \pi^* = \arg\max_{\pi} \mathbb{E} \left[ \sum_{t=0}^{s} \gamma^t R(\mathcal{C}_t, a_t)\right]
\end{equation}

Once MDP has been defined for a combinatorial optimisation problem, we must decide how the agent would search for the optimal
policy $\pi^*$. Broadly, there are two types of RL algorithms: value-based and policy-based methods. We implement in this work a value-based RL algorithm that first computes the value action function as the expected reward of a policy $\pi$ given a configuration $\mathcal{C}$ and performing a move $a$. The main difference between most value-based approaches is in how to estimate the value action
accurately and efficiently.

In the case of the sliding puzzle, the application of traditional RL faces additional obstacles. First, defining a reward policy for individual action-state pairs is impractical, as it is unclear if particular moves are helpful or unhelpful until the end of the search. The same moves may appear in winning and losing paths, and a heuristic evaluating midgame move quality proves elusive as moving between configurations very close in the Hamming metric can be arbitrarily difficult, or even impossible. Second, as the size of the state space increases exponentially with the dimension of the puzzle and the number of coloured vertices, initial random search for a path to the target configuration quickly becomes infeasible. To resolve this issue, we introduce two adjustments to the RL process for the sliding puzzle. First, we delay rewards until the end of each episode, and reward winning paths while punishing sub-optimal paths once a better path is known. Second, we introduce an initial breadth-first search starting from the target configuration to provide preliminary weights for the policy. The target is given the maximum reward to ensure it is selected from any adjacent configurations, while configurations branching from the target are given depreciating rewards based on how many steps they are from the target configuration. Below, we provide a brief summary of the process for our RL methodology on the sliding puzzle.

\begin{enumerate}
    \item Starting from the target configuration, we use breadth-first search to enumerate P configurations from the target state and assign preliminary weights to these configurations based on their distance to the target configuration.
    \item From the starting configuration, the agent makes weighted random moves according the the policy until the target configuration is reached. Configurations which have been searched in the previous step are favored over unexplored nodes.
    \item Once the target configuration has been reached, each node between the starting configuration and the target configuration is rewarded $r*(1+\lambda)^s$ where $s$ is the number of steps taken since starting and $r$ is the reward rate. Later steps are rewarded more heavily than earlier steps.
    \item From the starting configuration, the game is played again with each node chosen with probability weighted by the adjusted reward structure. In each episode, the game is terminated once more steps have been taken than the best known path achieved previously.
\end{enumerate}
Given the hypercube $Q_d$, the starting and target configurations ($\mathcal{S}$ and $\mathcal{T}$),  the maximum number of branching configuration $P$, the RL algorithm for the sliding puzzle is given in \autoref{appendix:RLAlgo}.

\subsubsection*{Initial breadth first search for RL}

In order to expedite the initial search for solutions in our RL implementation, we introduce an initial breadth first search starting from the target configuration. The idea is to widen the target around the target configuration, so that our initial random search is more likely to come upon configurations for which we know the optimal path to the target configuration. This is necessary, as our initial random search through the configuration space otherwise has to chance upon precisely the target configuration to find an initial path and assign weights for further optimisation. One can think of this initial search as allowing a golfer to aim for the green, rather than scoring a hole-in-one. An alternative would be to develop a heuristic for distance to the target configuration, or to provide a scoring mechanism to evaluate move quality during initial search, but as shown by the parity property switching even two vertices can result in unsolvable puzzles, so an effective heuristic proves elusive. 

To perform our initial search, we first determine $P\in \NN$ states to search starting from the target configuration. $P$ is determined based on the overall size of the configuration space. For most puzzles, we search through $P = 1000$ states initially. For the $d=5$ case, we search through $P = 100000$ states, as the configuration space for the $d = 5$ case is significantly larger. We expect fine-tuning $P$ would produce further benefits, but these selections were sufficient for our search to conclude with limited CPU time in the $d = 3,4$ cases and unlimited time in the $d = 5$ case. We leave the determination of an optimal selection for initial state search quantity $P$ for future work.

\subsection{Evolutionary algorithm (EA)}
In general, an evolutionary search algorithm on any fitness landscape consists of three main parts, which in the context of games: 
\begin{itemize}
	\item Initialisation: generating an initial population of game configurations, which consists of a population of starting configurations.
	\item Evaluation and selection: evaluating a given configuration in the population consists of assigning a weight to a list of moves already performed by an individual. Then, a selection force is computed by combining in a linear fashion, the number of cumulative moves and the weight and selecting a weighted (the higher the selection force, the more significant the proportion in the sample) random sample with replacement from the current population of agents to generate a new population. A detailed description of the objective function used in this work is provided in the following sections. 
	\item Mutation (or move) operation defines a set of rules or steps used to produce new configurations from the selected or initial ones. This component is elaborated further in the following subsection.

\end{itemize}
An individual in our EA is an agent with the following properties: 

\begin{enumerate}
    \item List of moves: an array that contains the different moves already performed by the agent
    \item Actual configuration: an array of $(2^d-l)-$tuple containing the occupied vertices and the corresponding ring occupying the vertex. 
    \item The hypercube $\mathcal{Q}_d$: the graph on which the puzzle is built. 
\end{enumerate}
An EA to solve this puzzle consists of evolving a population of agents that independently perform random moves on the hypercube to match all the rings to the appropriate vertex colour by minimising the number of moves. 
Two essential components must be defined to implement an EA on the higher-dimensional sliding puzzle: the fitness function or objective function and the mutation operator. 
\subsubsection*{Objective functions}
The objective function plays an essential role in any mathematical optimisation problem. To address the sliding puzzle problem computationally, the prerequisite is to define an appropriate objective function. In this section, we provide an overview of the objective function considered in the EA implementation. We aim to minimise the number of moves the agent population performs and maximise the number of matches between the ring and vertex colours. To that end, we consider two metrics: a hamming distance between the agent configuration and the target configuration and a second one that counts the number of moves an agent performs. Let $\mathcal{T}$ be the target configuration, which consists of a list of $L-$tuples (with $L=2^d -l$). The fitness of an agent $A$ is then defined as a function $f_a$: 
\begin{equation}
    f_a(\mathcal{C}, \mathcal{T}) =  \frac{1}{1+L-h(\mathcal{C},\mathcal{T})}, 
\end{equation}
where $\mathcal{C}$ is the current agent configuration and $h(\cdot, \cdot)$ the hamming distance between two configurations defined as follows: 

\begin{equation}
\label{eq:hamming}
h(\mathcal{C}, \mathcal{T}) = \sum_{i=1}^{L}{\delta(\mathcal{C}_i, \mathcal{T}_i)}
\end{equation}
and, 
$$
\delta(\mathcal{C}_i, \mathcal{T}_i) =
\begin{cases}
1 & \text{if $\mathcal{C}_i \neq  \mathcal{T}_i$ } \\
0 & \text{otherwise}.
\end{cases}
$$

\begin{algorithm}[t!]
	\tcc{$P'=\{A'_1\dots A'_N\}$: the best population\; 
		$P= \{A_1\dots A_N\}$: the initial population of $N$ agents configurations\;
		$\mathcal{D}$: a given probability distribution (L\'evy or Binomial) with parameter $p$ and $L$, where $L$ is the length of the target RNA structure\; 
		$T$: the maximum number of generations\; 
		$N$: the population size \; 
        $Q^d$: a hypercube of dimension $d$ on which the sliding puzzle is defined.\;
		$f_a(;)$: the fitness function used. It can be the hamming or any distance that allows to compare two configurations $\in Q^d$\;
		$\mathcal{T}$: the target configuration in its string representation \;
        $\mathcal{C}_b$: the configuration of the fittest agent in the population $P$\;
        $k\in \{ 1\cdots d-1\}$: the dimension of the face \;
        $\mathcal{S}$: starting configuration in its string representation\;}
	
	\KwInput{$T$,$P$, $\mathcal{D}(c), f(;), \mathcal{S}, \mathcal{T}$, $Q^d, k$}
	\KwOutput{Best population $P_b$ }
	$P_b\leftarrow P$ \tcp*{Assign the initial population to the best population }
	$t\leftarrow 0$ \tcp*{Initialise the number of generations to $0$ }
	$\mathcal{C}_b \leftarrow \mathcal{S}$ \tcp*{Set the starting configuration to be the best agent configuration.}
	\While{$t \leq T  \AND f_a(\mathcal{C}_b, \mathcal{T}) \neq 1$ } {
		$\Sigma \gets \{\mathcal{C}_i\}_{1\leq i \leq N}$ \tcp*{Get the configuration for each agent $i$ in the population $P$}
		
		$\kappa = \lfloor (N\times0.1) \rfloor$ \tcp*{The number of fit agents to copy in the next generation without mutating them.}
		$F \leftarrow \{f_a(\mathcal{C}, \mathcal{T}) | \forall \mathcal{C} \in \Sigma\} $ \tcp*{Evaluate the fitnesses of the population agent configurations to the target configuration $\mathcal{T}$ and store them in a list $F$}
  
		$E_{\kappa} \leftarrow \{A_1\dots A_{\kappa}\} \sim F$ \tcp*{copy of the $10\%$ best agent with respect to their fitnesses $F$.} 
		$S \leftarrow \{S_a(\mathcal{C},\mathcal{T}, A_i) | \forall\mathcal{C} \in \Sigma\} $ \tcp*{Evaluate the selection force of the population agent configurations}
  
		$P_S \leftarrow \{A_i\} \sim S$, where $i\in \{1,2,\dots,N-\kappa\}$\ \tcp*{Randomly sample $(N-\kappa)$ agents from $P'$ with respect to their selection forces $S$.}
		
		$M_t\leftarrow \texttt{mutate}(P_S, \mathcal{D}(c), Q^d, k)$ \tcp*{ Mutated the selected sequences using the mutation algorithm presented above.}
		
		$P_b \gets M_t \cup E_{\kappa}$\tcp*{Combine the mutated population and the best solutions to form the new population that will evolve in the next generation}
		$\mathcal{C}_b \leftarrow \arg \max_{\mathcal{C} \in \Sigma} f(\mathcal{C}, \mathcal{T})$\;
		$ t \gets t + 1$ \tcp*{Increment the time step (the number of generations)}
	}
	\caption{Main EA}\label{main_EA}
\end{algorithm}
The selection force of an agent is then defined as a linear combination of $f_a$ and the inverse of the number of moves performed so far by the agent. We write: 
\begin{equation}
    S_a(\mathcal{C}, \mathcal{T}, A_m) =  \alpha f_a + \frac{1-\alpha}{1+A_m},
\end{equation}
where $\alpha$ is a tunable parameter allowing us to balance between two objective functions: $f_a$ and the number of moves. Each agent in the population of size $N$, at time $t$ has a probability $p_i$ of being selected to the next generations $t+1$ defined as follows: 

\begin{equation}
    p_{i}(A) = \frac{S_i}{\sum_{i = 1}^{N} {S^i}},
\end{equation}
where $S_i$ is the selection force of an agent $i$ in a population of $N$-agents.
\subsubsection*{Mutation or move operator}
Each agent in the population independently makes $n$-moves at every generation of the EA. After performing random $n$ moves, the number of moves is then increased by $n$, and the agents' configuration fitness and selection force are reevaluated. 
The mutation is the process of updating the agent current configuration by randomly performing $n$-moves.

The EA explores the space $Q^d$ through its move (or mutation) operator. Given a configuration $\mathcal{C} \in Q^d$, a configuration $\mathcal{C}' \in Q^d$ is said to be an $n$-point mutation of $\mathcal{C}$ if it differs from $\mathcal{C}$ at $n$ ring positions; i.e. $h(\mathcal{C}, \mathcal{C}')=n$ where $h(.,.)$ is the hamming distance on $Q^d$. 

A mutation mode is a random variable $U$ taking values in $\{1,...,2^d - l\}$. We define $P(U=n)$ as the probability that exactly $n$ positions in a configuration $\mathcal{C}$, selected uniformly at random, undergo positional move during a mutation event. $U$ can generally be any probability distribution. We consider here the Zipf distribution:
	$$
	P(U=n)= \frac{1/n^c}{\sum_{k=1}^{L}{1/k^c}},
	$$
where $c>0$ is the value of the exponent characterising the distribution.
 Larger values of $c$ are associated with a more significant proportion of local search, while smaller values of $c$ imply a more considerable proportion of long-range search.

\subsubsection*{Constraints on the move operator}
For each position $i$ of the ring to move, some constraints need to be satisfied. We call $\mathcal{F}_i$ a $ k-$ dimensional face of the hypercube $Q^d$ and $R_i$ a ring on one of the vertices of $\mathcal{F}_i$. If the remaining vertices of $\mathcal{F}_i$ are unoccupied by any ring, then the ring $R_i$ can move or slide to occupy any of the vertices on $\mathcal{F}_i$.


We present the mutation algorithm in \autoref{appendix:mutation}. This mutation algorithm is integrated into a unified EA framework, allowing the update of agent configurations at each iteration or generation. After we apply the mutation operation to the population of agent solutions of the sliding puzzle, we evaluate the newly generated population; this is done using an objective or fitness function defined in the previous section.


\subsubsection*{Main EA}
Let \(T\) be the maximum number of generations and \(P_0\) the initial population of agents with all having as current configuration the starting configuration $\mathcal{S}$. Our EA is described in \autoref{main_EA}. The stopping criteria are two: 1) the number of generations ($t$) is equal to the max number of generations ($T$), or 2) the minimum hamming (or base-pair) distance of the best agent configuration to the target configuration is $0$ (i.e., the maximum fitness value is $1$).

\begin{table}[t!]
    \centering
       \caption{\textbf{Summary of $\text{A}^*$ search performance on the higher-dimensional sliding puzzle for $d=3, 4$ and $d=5$}. Face dimensions $k$ vary from $1$ to $d-1$ for both dimensions $d=3, 4$ and $d=5$. The success is $1$ when the algorithm terminates successfully and $0$ else. The difficulty levels range from easy (0) to hard (the most significant integer). The last column is the minimum number of moves to reach the target configuration $T$. Overall, the A* search solves the puzzle within an acceptable CPU time except the level $4$ for $d=3, k=1,2,3$, and the level $0$ for all the face dimensions $k$ for $d=5$.}
       \vspace{0.2cm}
    \begin{tabular}{cccccc}
        \toprule Dimension ($d$)&Number of uncoloured vertice ($l$)& Face dimension ($k$)& Level& Success & Min \\
        \midrule \multirow{6}{0em}{$3$}& \multirow{6}{0em}{$4$}&\multirow{3}{0em}{$1$} & $0$ & $1$& $4$\\
        \cline{4-6}
        & & & $1$& $1$ &$6$\\
        \cline{4-6}
        & & & $2$& $1$ &$10$\\
        \cline{3-6}
        & &\multirow{3}{0em}{$2$} & 0&$1$&$6$\\
        \cline{4-6}
        & & & $1$& $1$ &$7$\\
        \cline{4-6}
        & & & $2$& $1$ &$9$\\
        \midrule \multirow{12}{0em}{$4$}& \multirow{15}{1em}{$11$}&\multirow{4}{0em}{$1$} & $0$ & $1$& $4$\\
        \cline{4-6}
        & & & $1$& $1$ &$8$\\
        \cline{4-6}
        & & & $2$& $1$ &$6$\\
        \cline{4-6}
        & & & $3$& $1$ &$8$\\
        \cline{4-6}
        & & &4&0&-\\
        \cline{3-6}
        & &\multirow{4}{0em}{$2$} & 0&$1$&$6$\\
        \cline{4-6}
        & & & $1$& $1$ &$7$\\
       \cline{4-6}
        & & & $2$& $1$ &$6$\\
        \cline{4-6}
        & & & $3$& $1$ &$7$\\
        \cline{4-6}
        & & &4&0&-\\
         \cline{3-6}
         & &\multirow{4}{0em}{$3$} & 0&$1$&$8$\\
        \cline{4-6}
       & & & $1$& $1$ &$9$\\
       \cline{4-6}
       & & & $2$& $1$ &$10$\\
        \cline{4-6}
       & & & $3$& $1$ &$10$\\
       \cline{4-6}
        & & &4&0&-\\
        \midrule \multirow{5}{0em}{$5$}& \multirow{5}{1em}{$26$}&1& $0$ & $0$& -\\
        \cline{3-6}
        & & 2& 0& 0 &-\\
        \cline{3-6}
        & & 3& $0$& $0$ &-\\
        \cline{3-6}
        & & 4& 0& 0 & -\\
        \bottomrule
    \end{tabular}
    \label{tab:Astart_performance}
\end{table}

\section{Experimental Results}
Using the above methods, we computationally study the sliding game for dimensions $d=3, d=4$, and $d=5$ for face dimensions $1\leq k<d$ across different difficulty levels. This section provides a detailed analysis of the performances of each algorithm. We first surf through each method's individual performances, then present a comparative and CPU analysis. 
\begin{figure}[t!]
    \centering
    \includegraphics[width=1.0\linewidth]{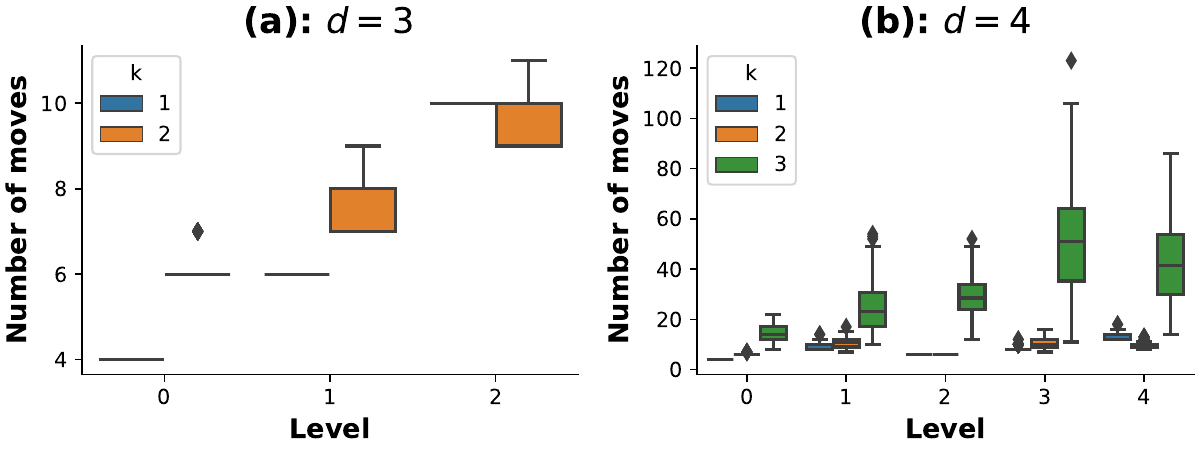}
    \caption{\textbf{RL Performances: distributions of the number of moves across different difficulty levels for dimensions $d=3,4$ with different face dimensions $k=\left\{1, \cdots, d-1\right\}$}. The number of moves increases with the face dimension and across difficulty levels, with a significant gap between level 4 and the other levels.}
    \label{fig:RL}
\end{figure}

\subsection{Individual Performances}
Depending on the initial position of the rings, the difficulty of solving the cubic puzzle can move from easy to very hard or impossible. For the puzzle of dimension $3$, three difficulty levels out of the four reported in \cite{beyer2023higher} are analysed in this work. One level has been reported to be impossible because the initial and target configurations are part of distinct disconnected components in the configuration space. The performances of five levels of difficulty for the puzzle of dimension $4$ and one level for dimension $5$ are also analysed in this section. We set each algorithm's maximum running CPU time to $3600s$, except for the puzzle of dimension $d=5$.
\subsubsection*{A* Search Performance} 
Applying the A* search on the higher-dimensional sliding puzzle is straightforward, and compared to stochastic methods, no tuning parameter is necessary. \autoref{tab:Astart_performance} shows the minimum number of moves for three puzzles with $d = 3$, five puzzles with $d=4$, and one puzzle with $d=5$ under applicable $k$ rules and across different levels of difficulty. We observe that the A* search produces optimal puzzle solutions for dimension $d=3$ and $d=4$ except for the level $4$. For different dimensions, face dimensions and difficulty levels, the minimum number of moves varies between $4$ and $10$. For the dimension $d=3$, the number of moves increases monotonously with the difficulty levels for face dimensions $k=1$ and $k=2$.

In contrast to the dimension $d=3$ for $d=4$, the number of moves increases with the difficulty level only for $k=3$. The latter may be because of the fact the difficulty levels were generated for $k=d-1$ only. For $d=4$, $k=2$, the number of moves is $7$ for both difficulty levels 1 and 3, where the number of moves is 6 for levels 0 and 2. The same applies for $k=1$, except that the number of moves is $4$ and $6$ for levels 0 and 2 and $8$ for levels 1 and 3. 

Compared to RL and EA methods, A* search outperforms success in finding the optimal solution for dimensions 3 and 4 across different difficulty levels, except for level 4 with $d=4$. The A* search did not find any optimum solution within the maximum running time allowed for level 4 of the puzzle of dimension $d=4$ and level 0 of dimension $5$ under different $k$-rules. 

\subsubsection*{RL Performance}

To run our RL algorithm, we first determine an appropriate number of states branching from the target state to enumerate before beginning initial random search. For all experiments, 100 branching states were searched, except for the $d=5$ puzzle where $100000$ states were searched. We set a learning rate of $\alpha = .05$ and a discount rate of $\gamma = .95$. For each experiment, 1000 iterations of the search were performed over 150 experiments. 

\autoref{tab:sum_performance_RL} shows the minimum, maximum, and median number of moves for three puzzles with $d=3$ and four puzzles with $d=4$ under applicable $k$ rules. We observe that for all puzzles aside from $d=4$ and $k=3$, the RL search produces optimal solutions within the experimental population. Compared to $A^*$ search and our EA, our RL search method appears capable of producing significantly higher maximum move solutions for more difficult puzzles, suggesting more significant variance in the algorithm's performance. \autoref{fig:RL} shows the distributions of the number of moves for the dimensions $d=3$ and $d=4$ under applicable $k$-rules and across different difficulty levels. Nevertheless, solutions are produced for all difficulty levels of the problem considered here for the puzzle of dimensions $3, 4$ and $5$.

\subsubsection*{EA Performance}
One of the challenging tasks when engineering an EA is to tune the hyper-parameters, such as the mutation rate, the population size and the selection function. To run our EA efficiently, we first analyse its critical parameters, such as the mutation rate $c$ and the selection force ($\alpha$) on the puzzle of dimension $3$, level $0$. Then, using the best algorithm parameter configuration, we analyse the agents' performance to solve different levels of the puzzle of dimensions $3, 4$ and $5$ while minimising the number of moves. 

\paragraph{Parameters analysis}

The best mutation parameter $c^*$ is the one that has the lowest median number of generations and the highest score in hitting the exact minimum number of moves.  Details about the mutation parameter analysis are presented in \autoref{appendix:EAparameter}. 


It is essential to notice on \autoref{fig:mu_success_ratio}a that when looking at the 6 moves' frequency,  $c^* = 1.8$ is the optimal mutation rate. For the benchmarks in this work, we set $c^*$ to $1.8$, simply because for $c=1.8$, $23$ agents can solve the puzzle with exactly $6$ moves while keeping the same success rate. Using the optimum value $c^* =1.8$ allows simultaneous search at all scales over the landscape. New mutations often produce nearby configurations (one move) but occasionally generate configurations far away in configuration space (several moves), consequently increasing the number of moves needed to solve the puzzle. 

We set the impact of the selection force parameter $\alpha$ to $0.2$, i.e., we minimise $80\%$ of the time the number of moves and only $20\%$ of the time the distance to the target. The EA also incorporates elitism to keep track of agents that are the closest to the target configuration. We will use a Levy mutation, i.e. $c^* = 1.8$, and the population size and the maximum number of generations are both set to $1000$.

\paragraph{Performance analysis}
Considering the previously best parameter of our EA for the puzzle of dimension 3 with level 0, we ran our EA on the two other levels of the same dimension and also on the puzzle of dimensions 4 and 5 across different levels and under the $k$-rule move.  
\begin{figure}[t!]
   \centering
   \includegraphics[width=1.0 \linewidth]{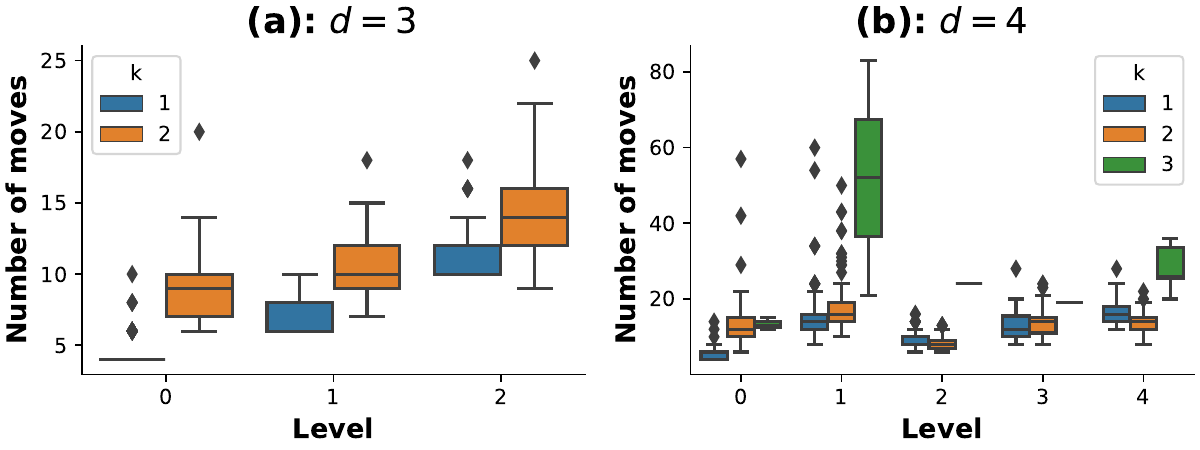}
   \caption{\textbf{EA Performances: distribution of the number of moves for different levels of difficulties and with puzzle parameters $k=2$, $d=3$ and $l=4$}. The minimum, maximum and median number of moves increases with difficulty. The minimum number of moves for different difficulty levels is $6, 7$, and $10$, with a frequency of $14, 2$, and $3$ out of 150 runs. }
   \label{fig:difficulty_levels}
\end{figure}
\autoref{tab:sum_performance} shows descriptive overall performance statistics for different puzzle and face dimensions across various levels. As the face dimensions increase, the EA performance degrades for dimensions 3 and 4, i.e., the median number of moves to solve the puzzle increases. The lower the face dimension is, the fewer constraints in the puzzle, allowing random moves to be more often valid and giving more chances of exploring the configuration space faster. Similarly, for $k=1$, the median number of moves needed to solve the puzzle of dimension $3$ increases with the difficulty level, except that the median number of moves for face dimension $k=1$ stays lower than that of face dimension $k=2$. 

In contrast to the puzzle of dimension $3$, the EA performance for $d=4$ sometimes degrades with the difficulty level, except from level 2 to level 2 and from level 2 to level 3. For different values of dimension face, levels 1 and 4 appear to be the most difficult, whereas the easier one is level 2. For $k=3$, the EA performance was considerably degraded, drastically dropping from 100\% to 2\% success rate for level 0, $1.3\%$ for level 1,  $0.06\%$ for levels 2 and 3, and $4.6\%$ for level 4. 

\begin{figure}[t!]
    \centering
    \includegraphics[width=1.0 \linewidth]{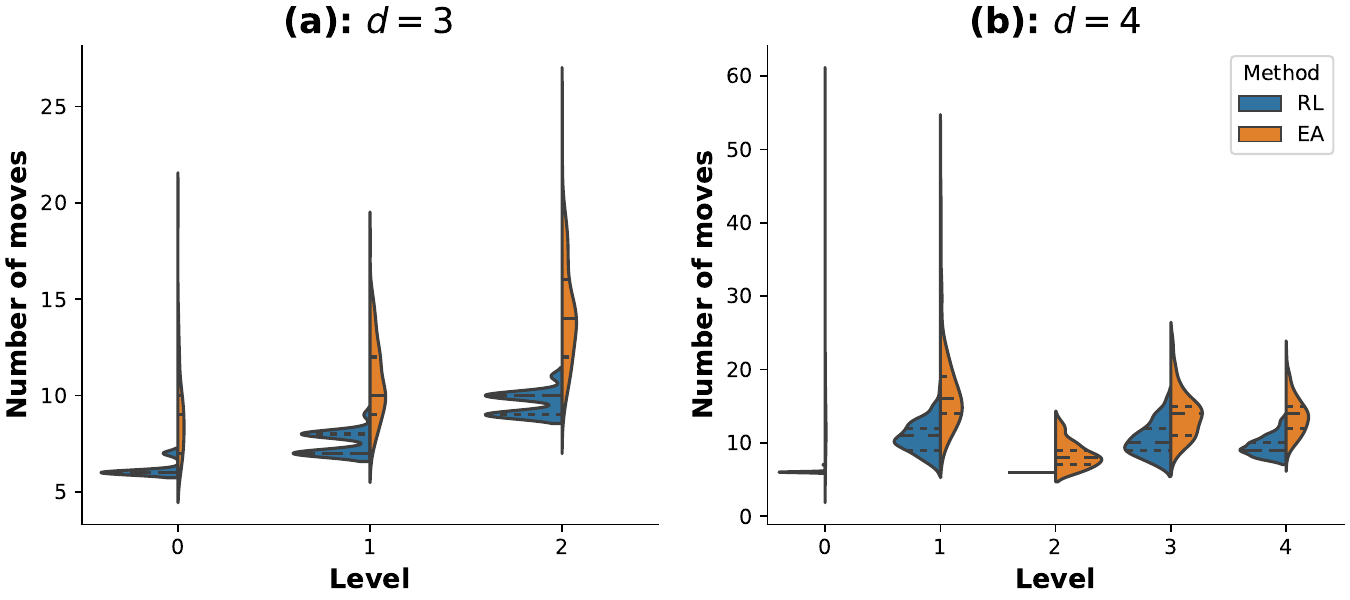}
    \caption{\textbf{EA \emph{vs.} RL number of moves distributions across different difficulty levels for the face dimension $k=2$}. \textbf{(a)} For the puzzles of dimension $d=3$, the EA number of move distributions have a wider spread of values, while RL distributions exhibit a more concentrated distribution around the median. \textbf{(b)} For the puzzle of dimension $d=4$, RL's number of moves shows a more peaked distribution, whereas EA's number of moves exhibits a more evenly spread distribution across its range.}
    \label{fig:RL_versus_EA}
\end{figure}
To illustrate the EA performance for dimensions 3 and 4 puzzles in more detail, \autoref{fig:difficulty_levels} shows the distribution of the number of moves for each difficulty level and under different $k$-rules. The higher the levels are, the higher the minimum number of moves to solve the puzzle. For dimension $d=3$, we observe a clear drop in performance as the puzzle difficulty increases, e.g., for level 0, $23$ agents can solve the puzzle with $6$ moves. In contrast, no agents solve the puzzle in higher levels with $6$ moves. Additionally, fewer agents in level 1 than in level 0 solve the puzzle in $7$ moves, where none of the agents can solve the puzzle in 7 moves for level 2. This observation holds for both face dimensions $k=1$ and $k=2$, and a similar observation can be made for the puzzle of dimension 4, except that the success rate for face dimension $k=4$ is very close to zero, which results in very few points in the displayed distributions. 

Overall, the EA's performance degrades as the difficulty levels and the face and puzzle dimensions increase. Compared to the A* search, the EA successfully solves the puzzle of dimension 5 under different $ k$-rules except for $k=4$; this demonstrates the power of stochastic searching techniques in a vast combinatoric space of possible solutions. Widespread, an EA with a Levy mutation can sample short paths for a higher-dimensional sliding puzzle for $d=4, 5$ and different values of $k$, which allows us to go beyond theoretical results obtained in \cite{beyer2023higher}.

\subsection{EA \emph{vs.} RL Performances}
Among the three algorithms benchmarked in the previous section, EA and RL techniques areå both choose the next move with respect to some probability proportional to the total reward/fitness of the path after adding that configuration to the solution path. This leads to random solution outputs. It is, therefore, essential, when comparing the EA and RL performances, to analyse the distribution of the minimum number of moves produced by each algorithm.

When only looking at the success rate, from Tables \ref{tab:sum_performance} and \ref{tab:sum_performance_RL} we notice that the RL method successfully solves the puzzles for dimension $d=3, 4$ and $d=5$ under different $k$-rules and across different difficulty levels. In contrast, the EA method failed to solve the puzzles of dimension $4$ for the face dimension $k=3$ with an overall success rate of less than $5\%$ and one puzzle of dimension $5$.

Furthermore, we assessed the distributions of the number of moves produced by both methods for the puzzle of dimensions $3$ and $4$ for face dimension $k=2$. \autoref{fig:RL_versus_EA} shows the distributions of the number of moves for both EA and RL methods. We noticed a less significant gap between the median and the variance of the distribution of the number of moves produced by our EA method and that of the RL method ($p$-value $=0.12$). For the puzzles of dimension $d=3$ and face dimension $k=2$, the EA distributions have medians $9, 10$, and $14$, whereas RL distributions are $6, 7$ and $10$, which results in an average gap of $3$ moves. Besides the median number of moves, the standard deviation of RL distributions is also lower than that of EA. The difference in the median number of moves is more significant, with a $p$-value of $0.03$ when comparing the distribution data for the puzzle of dimension $d=4$ and face dimension $k=2$. 
\begin{figure}[t!]
    \centering
    \includegraphics[width=1.0 \linewidth]{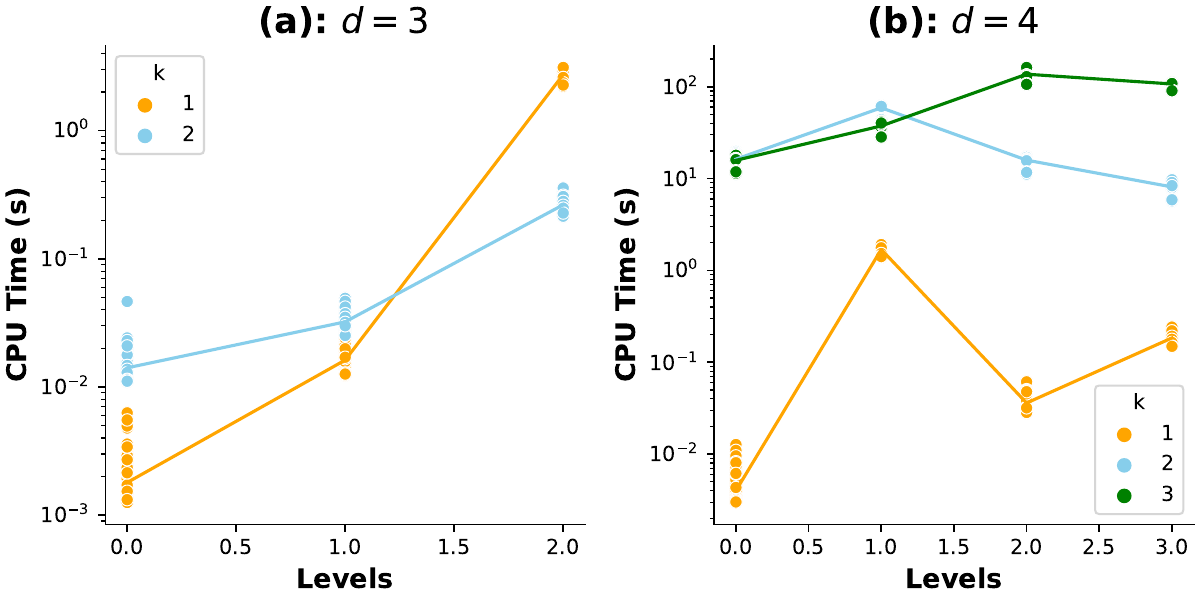}
    \caption{\textbf{A* search CPU time plotted against the difficulty levels for different face dimensions}. The dot points are different measurements, and the solid lines connect the medians. The A* algorithm consistently requires less computation time for smaller values of $k$ across all levels and puzzle dimensions, demonstrating higher efficiency than high values of $k$.}
    \label{fig:AS_CPU_Time}
\end{figure}
When comparing both results to the solved puzzle by A* search, since the A* outputs the exact solution when it stops, we notice very similar performance in terms of the minimum number of moves found. For the puzzles of dimension $3$, both methods, RL and EA, successfully solved the puzzle at least once with the minimum number of moves. The capacity of EA and RL to minimise the number of moves diminishes with the dimension increase and the difficulty levels, except for the face dimension $k=1$. In most cases, when both methods fail to produce the minimum number of moves, their approximated minimum number of moves is equal except for the puzzles of face dimension $k=3$, where EA fails to solve the puzzle. The principal advantage of the RL method over EA is when EA fails completely to produce an output, similar to the A* search. 

\subsection{CPU Time Analysis}
Another aspect of our computational study is the time complexity of different algorithms evaluated in our work. This section provides a quick analysis of the A* search algorithm and a brief empirical comparison of  the its running CPU time compared to the RL and EA algorithms.
\subsubsection*{A* Time Complexity}

The computational complexity of A* search is known to derive from the branching factor of the underlying state space. That is, if the average branching factor of each searched state is $b\in\mathbb{N}$ and the optimal solution is of length $n$, the complexity of A* search will be $O(b^n)$\cite{Russell2003artificial}. Hence, we show the following:


\begin{proposition}
The complexity of A* search on the cubical sliding puzzle with $k$-rule and optimal solution length $n$ is at least $O((2^k-1)^n)$
\end{proposition}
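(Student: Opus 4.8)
The plan is to establish the lower bound $O((2^k-1)^n)$ on the complexity of A* search by bounding the branching factor $b$ from below. Since the known complexity of A* search is $O(b^n)$ where $b$ is the average branching factor and $n$ is the optimal solution length, it suffices to show that from a typical configuration $\mathcal{C}$, the successor operator $\Gamma$ can generate at least $2^k-1$ successor configurations. First I would recall from Definition \ref{Def:krule} that a ring $r$ in a free $k$-state with respect to a $k$-dimensional face $\mathcal{F}$ may slide to any of the other vertices of $\mathcal{F}$. A $k$-dimensional face of $Q^d$ has exactly $2^k$ vertices, so a ring occupying one vertex of a free face can move to any of the remaining $2^k-1$ vertices. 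Each such destination yields a distinct successor configuration, giving at least $2^k-1$ neighbours contributed by a single ring on a single free face.

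The key steps, in order, would be: (1) count the vertices of a $k$-face of $Q^d$ as $2^k$, which is immediate from the hypercube structure since a $k$-face is determined by fixing $d-k$ coordinates and letting the remaining $k$ coordinates range freely over $\{0,1\}^k$; (2) invoke the successor operator's construction (its step of saving each move $(i,j)$ for every node $j$ of a free face) to conclude that a ring in a free $k$-state generates exactly $2^k-1$ distinct moves, hence $2^k-1$ distinct child configurations; (3) argue that in the course of the search there exists at least one such ring-face pair along the solution path, so the branching factor satisfies $b \geq 2^k-1$; and (4) substitute into the $O(b^n)$ bound to obtain $O((2^k-1)^n)$ as a lower bound on the worst-case complexity.

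The main obstacle I anticipate is the transition from a \emph{per-move} count to a statement about the \emph{branching factor} $b$ that feeds into the $O(b^n)$ formula. The quantity $2^k-1$ counts the successors produced by one ring over one free face, whereas the true branching factor aggregates contributions from all rings over all their free $k$-faces and then averages along the search. Because we seek only a lower bound, I would argue that it is enough to exhibit, at each step of an optimal-length path, at least one free $k$-state, so that $b$ is bounded below by $2^k-1$; the phrase ``at least'' in the proposition statement signals exactly this one-sided estimate. A subtlety worth flagging is that the bound is genuinely a lower bound on an upper-bound complexity expression, so the rigorous reading is that the search tree has at least $(2^k-1)^n$ nodes in the relevant branch, which I would make precise by counting the configurations reachable by moving a single fixed ring across successive free faces rather than by directly manipulating the averaged $b$.
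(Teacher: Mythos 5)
Your proposal is correct and follows essentially the same route as the paper: both arguments bound the branching factor below by $2^k-1$ (the paper by citing a prior result that any mobile vertex admits at least $2^k-1$ moves under the $k$-rule, you by directly counting the $2^k$ vertices of a free $k$-face) and then substitute into the $O(b^n)$ complexity of A* search. Your additional care about passing from a per-ring move count to the averaged branching factor is a reasonable refinement, but it does not change the structure of the argument.
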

\begin{proof}
From \cite{beyer2023higher}, it is known that with chosen $k$-rule for $k$, if a configuration can be branched from there are at least $2^k-1$ moves which can be made with any mobile vertex. Thus, the branching factor of each state is at least $2^{k}-1$. The conclusion follows from the complexity of A* search. 
\end{proof}

For example, we may conclude that the complexity of A* search for a face dimension $k=4$ is at most $O(7^n)$ for optimal solution $n$. 
\begin{figure}[t!]
    \centering
    \includegraphics[width=1.0 \linewidth]{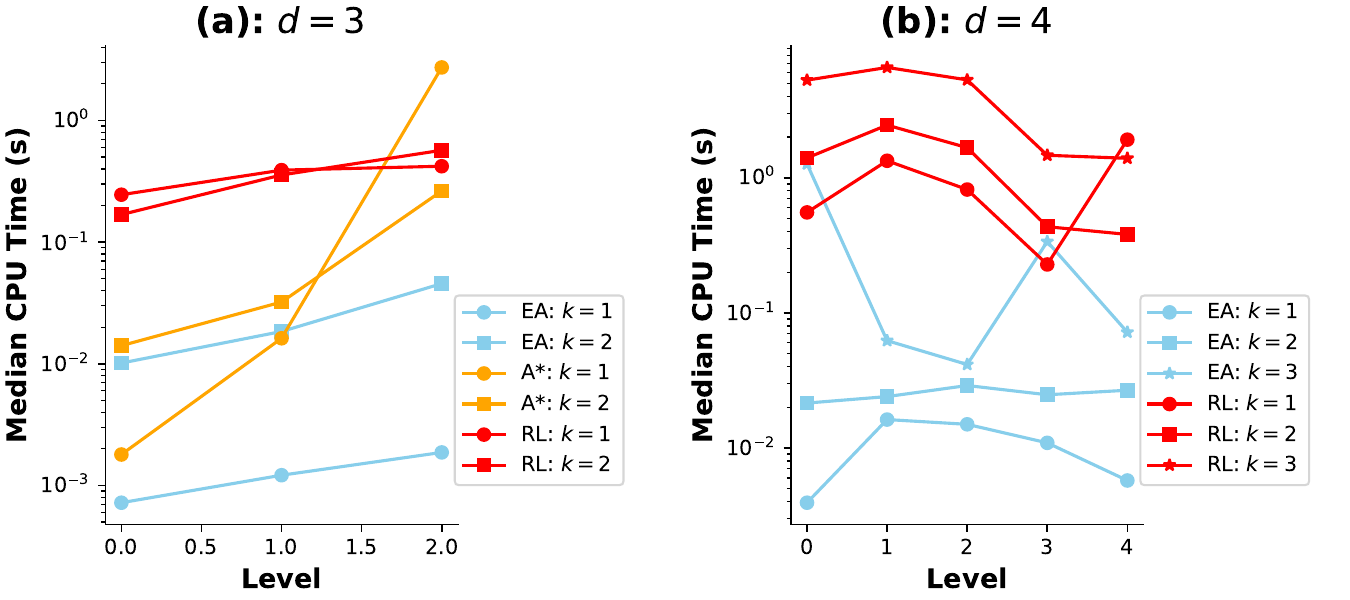}
    \caption{\textbf{A* search \emph{vs.} Comparison of CPU times for A*, RL, and EA methods across various puzzle difficulty levels for all face dimensions}. The results indicate that EA outperforms A* and RL methods, especially for higher difficulty levels, with reduced processing time. \textbf{(a)} For the puzzles of dimension $d=3$, RL exhibits significantly higher CPU usage, particularly when compared to EA, except for level $3$. \textbf{(b)} EA algorithm consistently requires less CPU time across different difficulty levels and face dimensions, highlighting its computational efficiency compared to RL.}
    \label{fig:RLEA_CPU_Time}
\end{figure}
This is a worst-case complexity assumption since the difficulty level influences the search algorithm greatly, as our previous analysis pointed out. We further present in \autoref{fig:AS_CPU_Time} the empirical CPU time measurements for all the A* search solved puzzles we previously presented. We notice the CPU time depends not only on the face dimension (or $k$-rules) but also significantly on the puzzle difficulty levels. The CPU time increases with difficulty levels across puzzle dimensions $d=3$ and $d=4$ while staying generally consistent with the face dimension. 

\subsubsection*{RL and EA CPU Time Analysis}

Comparing CPU time between exact and heuristic algorithms is not straightforward since the latter does not ensure optimality. On the one hand, our RL implementation performance strongly depends on the number of iterations and the number of branching states, which also implies a strong dependency between the CPU time and these two parameters. On the other hand, our EA performance depends on the population size and the number of generations we led it to run; these two parameters also influence the CPU time. For RL and EA, the number of branching of population size represents exactly the number of explored paths through the algorithm, and the CPU times are measured for a fixed number of branching and population size. To compare the RL and EA CPU time measurements to that of the A* search, we first retained only one of the puzzles successfully solved by the A* search and then divided the CPU time measurements obtained by both methods by their respective number of explored paths. 

\autoref{fig:RLEA_CPU_Time}a shows the CPU time measurements for all the A* search solved puzzles of dimension $d=3$ compared to RL and EA CPU time. Similarly to the A* search method, the computational time strongly depends on the puzzle dimension, face dimension, and difficulty levels, except that the gap between the face dimensions $k=1$ and $2$ is less significant for the RL method. The EA method demonstrates less computational times for the puzzles of dimension $d=3$ across different face dimensions and difficulty levels despite its poor performance in terms of minimising the number of moves. Furthermore, we compare the CPU time of RL and EA methods for the puzzles of dimension $d=4$. \autoref{fig:RLEA_CPU_Time}b shows the CPU time measurements plotted against different difficulty levels for all values of $k$. In contrast to the puzzle of dimension $d=3$, the CPU time decreases with the difficulty levels but stays consistent with different face dimensions.
\section{Conclusion}

In this work, we studied the performance of three computational methods, EA, RL, and A* search, on the higher-dimensional puzzle for dimensions $d=3$, $4$, and $5$. The first part of our work was designing the algorithms and methods for the higher-dimensional sliding puzzle. The second part was benchmarking different approaches and providing a plausible analysis of each method's performance and CPU time.

For the sliding puzzle of dimension $d=3$, all tools have proven excellent performance across different difficulty levels in minimising the number of moves except our EA method, especially for the face dimension $k=2$. The distributions of the number of moves produced by our EA had higher variance, possibly because of the choice of the implemented mutation scheme. When comparing the CPU time for the overall solved puzzles, A* search and EA were faster in producing solutions than RL despite our EA being less efficient in minimising the number of moves. 

For the puzzle of dimensions $d=4$, on the one hand, the A* search method successfully finds the shortest path, i.e., the smallest consecutive number of configurations that lead to the target configuration, whereas the RL and EA methods failed most, or when they do, the standard deviation of the distribution of the number of moves produced is high. On the other hand, A* search failed to solve the puzzle of dimension $d=4$ for level $4$ and all the puzzles of dimension $d=5$, whereas RL and EA successfully solved both of the puzzles. Similar to the dimension $d=3$, our EA ran in less time but failed in most cases to minimise the number of moves except for the puzzle of dimension $d=5$, for which A* search was unable to produce any solution.

Although the A* search performed well on the puzzle of dimensions $d=3$ and $d=4$, its time complexity remains the main challenge, as for many graph search techniques. Due to this, it becomes almost impossible to run it on puzzles of dimensions higher than four or even harder ones of dimension $4$ such as level 4. While EA and RL differ in their underlying principles and applications, they have proven similar potential to solve higher-dimensional sliding puzzles. Their complementary strengths have led to the development of hybrid approaches that leverage the global search capabilities of EAs and the sequential decision-making strengths of RL \cite{song2024reinforcement}. Combining both methods will directly improve intuition and elevate the challenges encountered. 

Another future work will be to build a full web application for the high-dimensional puzzle by integrating a backend and a better visualisation for the puzzles of dimensions higher than three. The latter will make the puzzle broadly available to human players and collect the puzzle data to enable further improvement to our algorithm or develop more refined techniques that exploit these data, e.g., supervised learning. It will also be interesting to compare our experimental results to human performances. 
\section*{Data availability}
This manuscript's codes and benchmark data are available at
\url{https://github.com/lemerleau/CubicPuzzle.git}. We also provide the scripts used for the figures and CPU Times analyses.
An online puzzle version is available on \url{https://service.scadsai.uni-leipzig.de/sliding-puzzle/}.
\section*{Funding}
We would like to acknowledge partial funding from the Alexander von Humboldt Foundation, the BMBF and the Saxony State Ministry for Science.

\section*{Acknowledgments}
We thank Peter Voran for the helpful discussions and Martin Miller for contributing to developing the new frontend and backend software to support this paper.

\bibliographystyle{unsrt}
\bibliography{main}

\newpage
\appendix 

\section{EA parameters tuning}\label{appendix:EAparameter}
To support the choice of the EA parameters used in this work, we analyse the impact of the mutation rate $c$ for a fixed selection force parameter $\alpha=0.2$.
There are three dimensions ($d=3$, $4$, and $d=5$) of the puzzle we studied, and for each of these dimensions, there are different levels of difficulty. For the mutation parameter analysis, we used the sliding puzzle of dimension $d=3$, with $l=4$ labelled vertices and $k=2$ for the dimension of the faces. We set the maximum number of generations $t$ to $1000$ and the population size $N$ to $1000$. The stopping criteria are two: 1) the number of generations ($t$) is equal to the max number of generations ($t$) or 2) the hamming distance of the best agent configuration solution to the target configuration is $1$.

\begin{figure}[H]
    \centering
    \includegraphics[width=1.0 \linewidth]{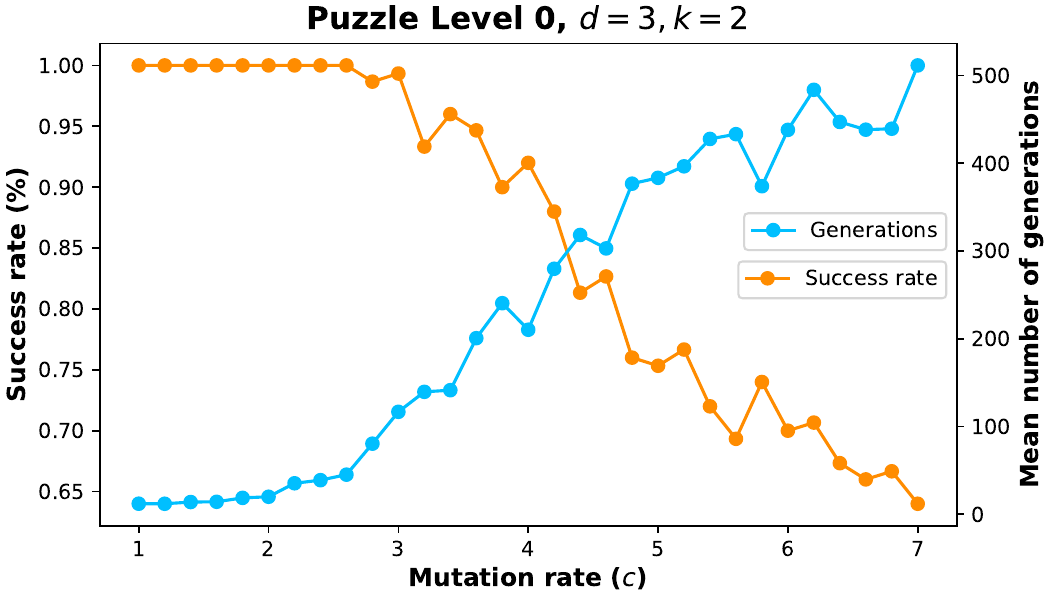}
    \caption{\textbf{EA: The success rate and the mean number of generations needed to reach the target configuration $T$ with respect to the mutation parameter $c$ (respectively in blue and orange)}. Performance degrades with the increase of $c$, and the number of generations increases with $c$.}
    \label{fig:mu_success_rate}
\end{figure}

\begin{figure}[H]
    \centering
    \includegraphics[width=1.0 \linewidth]{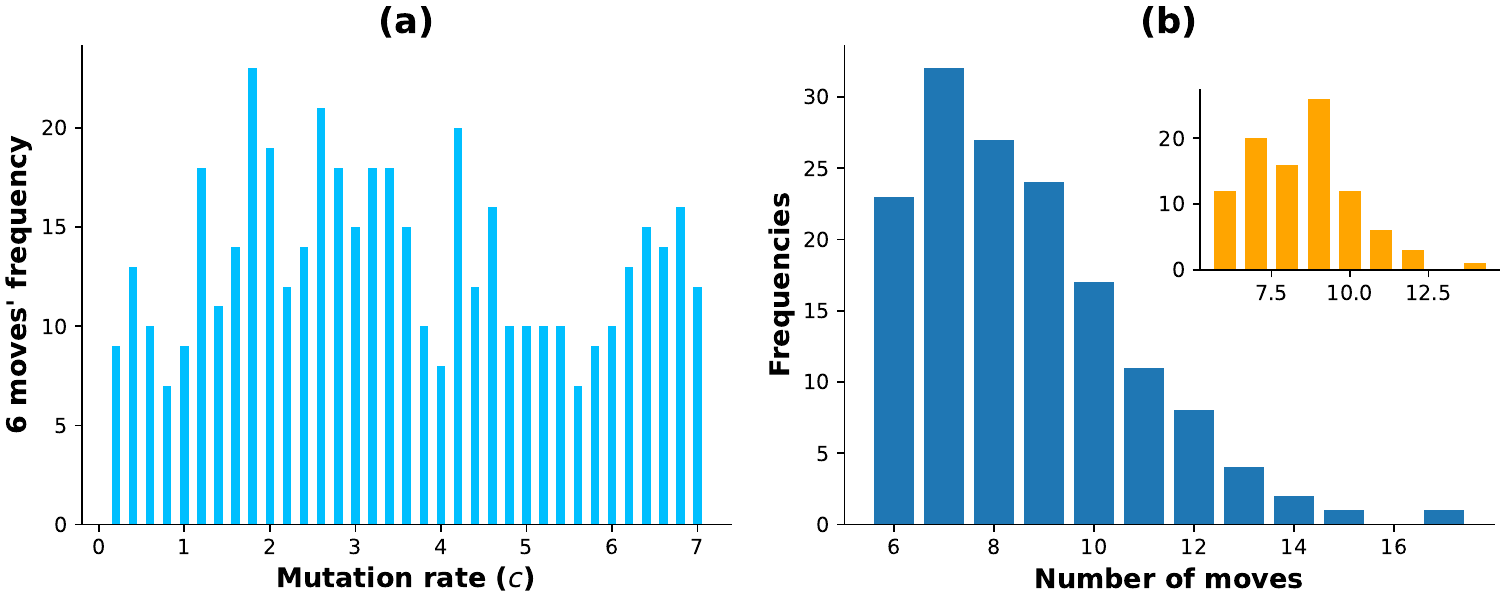}
    \caption{\textbf{EA: 6 moves' frequency \emph{vs.} mutation parameter $c$ and the frequency of the number of moves for the best parameter $c^*$ for the higher-dimension puzzle with parameter $d=3, k=2$ and $l=4$}. (a) The 6 move frequency with respect to the mutation parameter $c$. Based on the 6 moves' frequency, the best mutation parameter is $c^*=1.6$. (b) The frequency of the number of moves. For the mutation parameter $c^*=1.8$,  the EA reaches the target configuration $23$ times out of 150 trials with exactly six moves. On average, it performs ten moves to reach the target configuration $T$, whereas for $c=7$ (the inset figure), it reaches the target 12 times with six moves and the success rate for $c=1.4$ (100\%) almost doubled that of $c=7$ (60\%).}
    \label{fig:mu_success_ratio}
\end{figure}


\section{Different puzzles analysed in this work}
\label{appendix:data}
\begin{table}[H]
    
    \caption{For each of the $(d,k,l)$-puzzles, the initial coloured vertices are fixed. For example, in the $(3, 2, 4)$-puzzle, the fixed vertex colours are $[(0,w), (1,p), (2,w), (3,w), (4,g), (5, r), (6, b), (7,w)]$. The labels $w,p,r,g$, and $b$ are white (i.e. unlabelled or uncoloured), purple, red, green and blue and the number of the vertex's positions. In such a setting, a move, when possible, will consist of changing the vertex position of a given ring. For example, performing a single move from the initial configuration $[(4,r), (1, p), (5, b), (6, g)]$ can lead to the following configuration $[(4,r), (1, p), (5, b), (2, g)]$.
    }
    \centering
    \vspace{1cm}
    \begin{tabular}{p{1.5cm}p{1.5cm}p{2cm}p{4cm}p{5cm}}
         \toprule \textbf{Cube dimension} ($d$)& \textbf{Face dimension} ($k$)& \# of \textbf{unlabelled vertices} ($l$)& \textbf{Starting configuration} & \textbf{Target configuration}  \\
         \midrule
         $d=3$& $k=2$ &$l=4$ & [(4,r), (1, p), (5, b), (6, g)] & [(4, g), (1, p), (5, r), (6, b)] \\
         \hline
         $d=3$& $k=2$ &$l=4$ & [(4,p), (1, b), (5, r), (6, g)] & [(4, g), (1, p), (5, r), (6, b)]\\ 
         \hline
         $d=3$& $k=2$ &$l=4$ & [(4,r), (1, b), (5, g), (6, p)] & [(4, g), (1, p), (5, r), (6, b)] \\ 
         \hline
         $d=3$& $k=2$ &$l=4$ & [(4,r), (1, p), (5, g), (6, b)] & [(4, g), (1, p), (5, r), (6, b)]\\
         \hline
         $d=4$& $k=3$ &$l=11$ & [(13,r), (0, p), (11, g), (3, b), (7,y)] & [(0, g), (3, p), (13, r), (11, b), (7,y)]\\
         \hline
         $d=4$& $k=3$ &$l=11$ & [(15,r), (2, p), (1, g), (9, b), (8,y)] & [(1, g), (9, p), (8, r), (2, b), (15,y)]\\
         \hline
         $d=4$& $k=3$ &$l=11$ & [(13,r), (0, p), (4, g), (8, b), (11,y)] & [(8, g), (0, p), (13, r), (11, b), (4,y)]\\
         \hline
         $d=5$& $k=4$ &$l=26$ & [(30, g), (29, p), (0, r), (17, b), (4, y), (1, o)] & [(17, g),
         (30, p),
         (0, r),
         (29, b),
         (4, y),
         (1, o)]\\
         \bottomrule
    \end{tabular}
    \label{tab:my_label}
\end{table}

\section{EA: Mutation algorithm} \label{appendix:mutation}

\begin{algorithm}[H]
	\tcc{$P'=\{A'_1\ldots A'_N\}$: the mutated population of agents\; 
		$P= \{A_1\ldots A_N\}$: a list of $N$ agents to mutate\;
		$Q^d$: a hypercube of dimension $d$ on which the sliding puzzle is defined.\;
		$\mathcal{D}$: a given probability distribution (Zipf) with parameter $c$.\;
        $k\in \{ 1\cdots d-1\}$: the dimension of the face $\mathcal{F}$ \;
        $\mathcal{F}_r$: the face associated with the ring $r$\;
        $N$: the population size \;} 
	\KwInput{$P$, $\mathcal{D}(c)$, $Q^d, k$}
	\KwOutput{$P'$} 
	$ \{B_i\} \sim \mathcal{D}(c)$, where $i\in \{1,2,\dots,N\}$ \tcp*{Draw $N$ random numbers that follows a given distribution $\mathcal{D}$ (Zipf in this case). $B_i$ is the number of positions in a configuration to slide}
	
	\For{ $i \in \{1, 2, \dots, N\}$ }{
		$A'\leftarrow P_i$ \tcp*{Make a copy of the agent $A_i \in P$ to $A'$ }
        $\mathcal{C}_a \leftarrow$ get the current configuration of the agent $A'$\; 
		\For{ $j \in \{1,2,\dots, B_i\}$}{
			$r \in \{1,2,\dots, 2^d-l\} \sim \mathcal{U}$ \tcp*{select uniformly a random position in the configuration $\mathcal{C}_a$}
			  $\mathcal{F}_r \leftarrow $ get the $k$-dimensional face of $Q^d$ where the ring $R_r$ occupies the vertex at position $r$

         \If{$R_r$ is a k-free state with respect to $\mathcal{F}_r$}{
                $m \in \mathcal{F}_{r} \sim \mathcal{U}$ \tcp{select randomly a vertex on the face $\mathcal{F}_r$}
                move the ring $R_r$ to the vertex $m$\;
                $Add$ the move $(R_r, R_m)$ to the list of moves of agent $A'$\;
                Update the current configuration $\mathcal{C}_a$\;
            }
     }
          Update the current configuration of the agent $A'$\;
          $P' \gets P'\cup A'$ \tcp*{Add the mutated agent $A'$ to the list $P'$} 
		}
		
	\caption{Agent mutation algorithm}\label{Algo:move}
\end{algorithm}

\section{RL algorithm}\label{appendix:RLAlgo}
\begin{algorithm}[H]
	\tcc{$Q^d$: a hypercube of dimension $d$ on which the sliding puzzle is defined\;
        $\alpha$:, the learning rate.\;
        $\gamma$:, the discount rate.\;
        $k\in \{ 1\cdots d-1\}$: the $k$ rule for the given puzzle.\;
		$\mathcal{S}\subset Q^d$: the initial ordered unique configuration of rings on the hypercube $Q^d$.\;
        $\mathcal{T}\subset Q^d$: the target ordered unique configuration of rings on the hypercube $Q^d$.\;
        $P\in \mathbb{N}$:, the number of branching configurations from the target configuration to enumerate.\;
        $i$:, the initial weighting for each configuration.\;
        $w(\mathcal{C})$:, a function returning the current weight for each configuration $\mathcal{C}$. Initialised at $i$.\;
        $s(\mathcal{C})$:, a function returning a successor configuration to the current one based on the current weight function $w(\mathcal{C}).$\;
        $I$:, the number of iterations to run.\;
        $M$:, the best path found.\;
        }
	\KwInput{$\mathcal{S}$, $\mathcal{T}$, $Q^d$, $P$}
	\KwOutput{$M$}
        $O \leftarrow \{\mathcal{T}\}$\ \tcp*{Edge configurations on an expansion of the target configuration.}
        $w(\mathcal{T}) \leftarrow 1$ \tcp*{the target configuration is assigned maximal weight.}
        $c \leftarrow 1$ \tcp*{A counter value.}
        \While{$n(\mathcal{T})<N$}{
            $O^* \leftarrow\emptyset$\;
            \For{$\mathcal{C}\in O$}
            {
                $O^* \leftarrow O^*\cup n(\mathcal{C})$ \tcp*{Attaching neighbors of $\mathcal{C}$ to $O^*$.}
                \For{$\mathcal{C}^*\in n(\mathcal{C})$}
                {
                    $w(\mathcal{C}^*)\leftarrow\frac{1}{2^c}$ \tcp*{Assigning decreasing weight to each value extending from the target configuration.}
                }
            }
            $c \leftarrow c+1$\;
            $O \leftarrow O^*$\;
        }
        $o \leftarrow\infty$ \tcp*{Initialise optimal configuration to infinity.}
        \For{$k\in I$}{
            $c \leftarrow 0$\;
            $\mathcal{C} \leftarrow \mathcal{S}$\tcp*{set the current configuration to starting configuration.}
            $M^* \leftarrow \emptyset$ \tcp*{Initialise empty series.}
    	\While{$\mathcal{C}\neq \mathcal{T}$ \AND $c<0$ \AND $s(\mathcal{C})$ is not undefined}
             {
                $M^* \leftarrow M^*\cup \{\mathcal{C},c\}$\;
                $\mathcal{C} \leftarrow s(\mathcal{C})$ \tcp*{Move to successor.}
                $c \leftarrow c + 1$\;
             }
             \If{$c>o$ $\OR $ $s(\mathcal{C})$ is undefined}
             {
                $W = $\textbf{False}\;
             }
             \textbf{Else}, $W = $\textbf{True}\;
             \If{$W = $\textbf{False}}
             {
                \For{\{$\mathcal{C},c$\} $\in M$}
                {
                    $w(\mathcal{C}) \leftarrow (1-\alpha)*w(\mathcal{C})*(0.95)^c$\;
                }
             }
             \If{$W = $\textbf{True}}
             {
                \For{\{$\mathcal{C},c$\} $\in M$}
                {
                    $w(\mathcal{C}) \leftarrow (1-\alpha)^c*w(C) + \alpha * (0.95)^c$\;
                }
                $M \leftarrow M^*$ \tcp*{The new path is shorter than the earlier optimal one; record it.}
                $o \leftarrow c$\;
             }
        }
	\caption{RL search}\label{Algo:RL}
\end{algorithm}
\newpage
\section{EA performance table} \label{appendix:EATable}
\begin{table}[H]
    \centering
    \caption{\textbf{Summary of EA's performance on higher-dimensional sliding puzzle for $d=3, 4$ and $d=5$}. Face dimensions $k$ vary from $1$ to $d-1$ for all dimensions. The EA is launched $150$ times for each puzzle and face dimension. The success rate is the number of times our EA reaches the target configuration out of the $150$ runs. The difficulty levels range from easy (0) to hard (the most significant integer). The last columns are the minimum, maximum and median number of moves to reach the target configuration $T$. Overall, the EA solves the puzzle after an average of $1000$ generation with a minimum of $4$ moves for $k=1$ and a maximum of $83$ moves. For the puzzle $d=4$, $l=11$, and $k=3$, our EA failed most often to reach the target configuration for different levels after 1000 generations.}
    \vspace{0.2cm}
    \begin{tabular}{cp{2.5cm}p{2cm}ccccc}
         \toprule Dimension ($d$)&Number of uncoloured vertices ($l$)& Face dimension ($k$)& Level& Success rate& Min & Max & Median\\
        \midrule \multirow{6}{0em}{$3$}& \multirow{6}{0em}{$4$}&\multirow{3}{0em}{$1$} & 0& $100\%$& 4 &$10$ & $4$\\
        \cline{4-8}
        & & & 1& $100\%$& $6$ &$10$ & $6$\\
        \cline{4-8}
        & & & 2& $100\%$& $10$ &$18$ & $12$\\ 
        \cline{3-8}
        & &\multirow{3}{0em}{$2$} & 0&$100\%$&$6$&$20$ & $9$\\
        \cline{4-8}
        & & & 1& $100\%$& $7$ &$18$ & $10$\\ 
        \cline{4-8}
        & & & 2& $100\%$& $9$ &$25$ & $14$\\ 
        \midrule \multirow{12}{0em}{$4$}& \multirow{12}{1em}{$11$}&\multirow{4}{0em}{$1$} & 0& $100\%$& $4$& $14$& $4$\\
        \cline{4-8}
        & & & 1& $100\%$& $8$& $60$& $14$\\
        \cline{4-8}
        & & & 2& $100\%$& $6$& $16$& $8$\\
        \cline{4-8}
        & & & 3& $100\%$& $8$& $28$& $12$\\
         \cline{4-8}
        & & & 4& $100\%$& $12$&$28$&$16$ \\
        \cline{3-8}
        & &\multirow{4}{0em}{$2$} & 0& $100\%$&$6$& $57$& $12$\\ \cline{4-8}
       & & & 1&$100\%$ & $10$ & $50$ & $16$\\
       \cline{4-8}
        & & & 2&$100\%$ & $6$ & $13$ & $8$\\
        \cline{4-8}
        & & & 3&$100\%$ & $8$ & $24$ & $14$\\
         \cline{4-8}
        & & & 4& $100\%$&$8$&$22$&$14$ \\
         \cline{3-8}
        & &\multirow{4}{0em}{$3$} & 0& $2\%$&$12$& $15$& $13$\\ \cline{4-8}
       & & & 1&$1.3\%$ & $21$ & $83$ & $52$\\
       \cline{4-8}
        & & & 2&$0.06\%$ & $24$ & $24$ & $24$\\
        \cline{4-8}
        & & & 3&$0.06\%$ & $19$ & $19$ & $19$\\
         \cline{4-8}
        & & & 4& $4.6\%$& $20$&$36$&$26$\\
       \midrule \multirow{5}{0em}{$5$}& \multirow{5}{1em}{$26$}&1& 0& $100\%$& $8$& $22$& $12$\\
        \cline{3-8}
        & &$2$ & $0$&$100\%$& $4$& $17$ &$9$\\
        \cline{3-8}
        & & $3$& $0$&$100\%$& $4$& $14$ &$7$\\
        \cline{3-8}
        & & $4$& $0$& $0\%$& -& - & -\\
        \bottomrule
    \end{tabular}
    \label{tab:sum_performance}
\end{table}
\newpage

\section{RL performance table}\label{appendix:RLTable}
\begin{table}[H]
    \centering
    \caption{\textbf{Summary of RL's performance on higher-dimensional sliding puzzle for $d=3, 4$ and $d=5$}. Face dimensions $k$ vary from $1$ to $d-1$ for both dimensions $d=3$ and $d=4$. The RL is launched $150$ times for each puzzle and face dimension. The success rate is the number of times RL reaches the target configuration out of the 150 runs. The difficulty levels range from easy (0) to hard (the most significant integer). The last columns are the minimum, maximum and median number of moves to reach the target configuration $T$. Overall, the RL solves the puzzle after, on average, $1000$ iterations with a minimum of $4$ moves for $k=1$ and a maximum of $316$ moves for $k=4$. }
    \vspace{0.2cm}
    \begin{tabular}{cp{2.5cm}p{2cm}ccccc}
         \toprule Dimension ($d$)&Number of uncoloured vertice ($l$)& Face dimension ($k$)& Level& Success rate& Min & Max & Median\\
        \midrule \multirow{6}{0em}{$3$}& \multirow{6}{0em}{$4$}&\multirow{3}{0em}{$1$} & 0& $100\%$& $4$&$4$ & $4$\\
        \cline{4-8}
        & & & 1& $100\%$& $6$ &$6$ & $6$\\
        \cline{4-8}
        & & & 2& $100\%$& $10$ &$10$ & $10$\\ 
        \cline{3-8}
        & &\multirow{3}{0em}{$2$} & 0&$100\%$&$6$&$7$ & $6$\\
        \cline{4-8}
        & & & 1& $100\%$& $7$ &$9$ & $7$\\ 
        \cline{4-8}
        & & & 2& $100\%$& $9$ &$12$ & $10$\\ 
        \midrule \multirow{12}{0em}{$4$}& \multirow{12}{1em}{$11$}&\multirow{5}{0em}{$1$} & 0& $100\%$& $4$& $4$& $4$\\
        \cline{4-8}
        & & & 1& $100\%$& $8$& $60$& $14$\\
        \cline{4-8}
        & & & 2& $100\%$& $6$& $16$& $8$\\
        \cline{4-8}
        & & & 3& $100\%$& $8$& $28$& $12$\\
        \cline{4-8}
        & & & 4&$100\%$&$12$&$28$& $16$\\
        \cline{3-8}
        & &\multirow{4}{0em}{$2$} & 0& $100\%$&$6$ & $57$ & $12$\\ \cline{4-8}
       & & & 1&$100\%$ & $10$ & $50$ & $16$\\
       \cline{4-8}
        & & & 2&$100\%$ & $6$ & $13$ & $8$\\
        \cline{4-8}
        & & & 3&$100\%$ & $8$ & $24$ & $14$\\
         \cline{4-8}
        & & & 4&$100\%$&$8$&$22$& $14$\\
         \cline{3-8}
        & &\multirow{4}{0em}{$3$} & 0& $100\%$&$8$& $23$& $14$\\ \cline{4-8}
       & & & 1&$100\%$ & $9$ & $59$ & $24$\\
       \cline{4-8}
        & & & 2&$100\%$ & $11$ & $66$ & $28$\\
        \cline{4-8}
        & & & 3&$100\%$ & $17$ & $105$ & $49$\\
         \cline{4-8}
        & & & 4&$100\%$&$12$&$106$&$41.5$\\
       \midrule \multirow{5}{0em}{$5$}& \multirow{5}{1em}{$26$}&1& $0$& $100\%$& $8$& $10$& $8$ \\
        \cline{3-8}
        & & 2& $0$& $100\%$& $4$& $6$& $5$\\
        \cline{3-8}
        & & 3& $0$& $100\%$& $4$& $6$& $4$\\
        \cline{3-8}
        & & 4& $0$& $100\%$& $62$& $316$& $161$\\
        \hline
    \end{tabular}
    \label{tab:sum_performance_RL}
\end{table}

\end{document}